\newtheorem{theorem}{Theorem}
\newtheorem{lemma}{Lemma}
\newtheorem{corollary}{Corollary}
\newtheorem{definition}{Definition}
\newtheorem{proposition}{Proposition}
\newcommand{\summe}[2]{\sum_{{#1}}^{#2}}
\newcommand{\logit}[0]{\mathbb{L}}
\newcommand{\mlogit}[0]{\mathcal{L}}
\newcommand{\mpr}[0]{\mathcal{P}}
\newcommand{\partder}[2]{\frac{\partial #1}{\partial #2}}
\def\pr{\ensuremath\mathbb{P}}
\begin{document}

\title{Maximum Probability Theorem: A Framework for Probabilistic Machine Learning}

\author{Amir Emad Marvasti, Ehsan Emad Marvasti, Ulas Bagci, Hassan Foroosh
\thanks{ Amir Emad Marvasti is with the Department of Computer Science, University of Central Florida, Orlando, FL 32826 USA (e-mail: aemad@cs.ucf.edu).}
\thanks{Ehsan Emad Marvasti is with the Department of Computer Science, University of Central Florida, Orlando, FL 32826 USA (e-mail: eemad@cs.ucf.edu).}
\thanks{Ulas Bagci is with the Machine and Hybrid Intelligence Lab,
Department of Radiology, Feinberg School of Medicine, and Department of BME, School of Engineering, 
Northwestern University, IL, USA (e-mail: ulas.bagci@northwestern.edu).}
\thanks{Hassan Foroosh is with the Department of Computer Science, University of Central Florida, Orlando, FL 32826 USA (e-mail: hassan.foroosh@ucf.edu).}
\thanks{This paragraph will include the Associate Editor who handled your paper.}}

\markboth{Journal of IEEE Transactions on Artificial Intelligence, Vol. 00, No. 0, Month 2020}
{Marvasti \MakeLowercase{\textit{et al.}}: Maximum Probability Theorem}

\maketitle

\begin{abstract}
We present a theoretical framework of probabilistic learning derived from the \textit{Maximum Probability (MP) Theorem} shown in the current paper.
   In this probabilistic framework, a model is defined as an \textit{event} in the probability space, and a model or the associated \textit{event} - either the true underlying model or the parameterized model - have a quantified probability measure.
   This quantification of a model's probability measure is derived by the \textit{MP Theorem}, in which we have shown that an event's probability measure has an upper-bound given its conditional distribution on an arbitrary random variable.
   Through this alternative framework, the notion of model parameters is encompassed in the definition of the model or the associated \textit{event}.
   Therefore, this framework deviates from the conventional approach of assuming a prior on the model parameters.
   Instead, the regularizing effects of assuming prior over parameters are imposed through maximizing probabilities of models or according to information theory, minimizing the information content of a model. 
   The probability of a model in our framework is invariant to reparameterization and is solely dependent on the model's likelihood function.
   Also, rather than maximizing the posterior in a conventional Bayesian setting, the objective function in our alternative framework is defined as the probability of set operations (e.g. intersection) on the \textit{event} of the true underlying model and the \textit{event} of the model at hand. 
   Our theoretical framework adds clarity to probabilistic learning through solidifying the definition of probabilistic models, quantifying their probabilities, and providing a visual understanding of objective functions.
\end{abstract}

\begin{IEEEImpStatement}
The choice of prior distribution over the parameters of probabilistic machine learning models determines the regularization of learning algorithms in the Bayesian perspective.
The complexity in choice of prior over the parameters and the form of regularization is relative to the complexity of the models being used.
Thereby, finding priors for parameters of complex models is often not tractable.
We address this problem by uncovering MP Theorem as a direct consequence of Kolmogorov's probability theory.
Through the lens of  MP Theorem, the process of regularizing models is understood and automated.
The regularization process is defined as the maximization of the probability of the model. The probability of the model is understood by MP Theorem and is determined by the behavior of the model.
The effects of maximizing the probability of the model can be backpropagated in a gradient-based optimization process.
Consequently, the MP framework provides a form of black-box regularization and eliminates the need for case-by-case analysis of models to determine priors.
\end{IEEEImpStatement}

\begin{IEEEkeywords}
Probabilistic Machine Learning, Regularization, Prior Knowledge, Uncertainty, Artificial Intelligence, Objective Functions, Information Theory
\end{IEEEkeywords}

\section{Introduction}

\IEEEPARstart{A}{} central problem in probabilistic learning and Bayesian statistics is choosing prior distributions of random variables and subsequently regularizing models. 
The importance of prior distributions is studied in Bayesian statistical inference since the choice affects the process of learning.
However, the choice of prior distributions is not clearly dictated by the axioms of probability theory.
In current applications of the Bayesian framework, the choice of prior differs from case to case, and still in many practical scenarios the choice of prior is justified by experimental results or intuitions.
Observe that there has been substantial attempts to unify the choice of priors over random variables, e.g. Laplace's Principle of Indifference \cite{jaynes2003probability}, Conjugate Priors \cite{gelman2006prior,diaconis1979conjugate}, Principle of Maximum Entropy \cite{jaynes1968prior}, Jeffreys priors \cite{jeffreys1946invariant} and Reference Priors \cite{berger2009formal}.
The overall goal of the existing literature is to pinpoint a single distribution as a prior to unify and objectify the inference procedure.

\subsection{The Proposed Maximum Probability Approach}
Contrary to many classic problems of inference and statistics where a hidden parameter of interest needs to be estimated, machine learning does not necessarily follow this goal.
The relevant solution to many complex problems in machine learning is the final likelihood of observable variables.
The likelihood functions in many cases are not necessarily the familiar likelihood functions ( e.g. Bernoulli, Gaussian, etc.) and may take complex forms.
A good example of such complex likelihood functions is Neural Networks in the context of image classification \cite{krizhevsky2012imagenet}.
If such networks are viewed through the Bayesian perspective, the model is the conditional distribution of labels given input images.
Finding an analytical close form for the prior over the parameters of complex models is not currently practical and to the knowledge of the authors, an automatic and practical procedure for assuming prior over the parameters of arbitrary models is not known to date.

A different approach is considered in this paper, where instead of assuming a prior over parameters, we focus on the likelihood functions.
A preliminary approach is to assume a density over possible likelihood functions, but we can simplify the perspective further by paying attention to the probability space.
Different likelihood functions on random variable $V$ can be formalized as $P_{V|M_\theta}$, where $M_\theta$ is an \textit{event} in the underlying probability space.
In this perspective, $\theta$ is enumerating the events in the probability space and serves as a numerical representative, not necessarily a random variable.
Through the MP theorem proved in this paper, by having $P_V$ - prior over the observables - an upper bound on the probability of $M_\theta$ can be calculated .
Thereby, instead of considering the likelihood function directly, the underlying event $M_\theta$, with a quantifiable probability measure is considered.

Viewing models as events can be extended to the true underlying model. 
Consequently, the problem of learning the true underlying model reduces to maximizing the probability of similarity between the model and the true model.
Since event are sets, the similarity between the model and the true underlying model is defined through set operations.
As an example, the probability of the intersection of the parameterized model and the true underlying model could be maximized.
The probability of the intersection event - as an objective function - is maximized through tuning the parameters of the model.
In the case of intersection objective function, we show that maximizing the probability of the model regularizes the model.

From the probability theory perspective, MP Theorem and its consequences extends the ability of probability theory to assign probabilities to \textit{uncertain} outcomes of random variables. 
It is because uncertainty in outcomes of a random variable can be modeled with a conditional distribution of the random variable given some underlying event.
We consider the probability of uncertain observations to be the probability of the underlying event.
We show that considering probability upper bound as the probability of the underlying event is consistent with existing definitions.
In this paper, we only investigate finite-range observable random variables and leave the extension of the continuous random variables for future works.

In probabilistic machine learning, MP Theorem has the following desirable properties. 
\textbf{(i)} the complexity of choosing prior is relative to the complexity of the observable random variables.
As an example, in the MP framework, having a Bernoulli observable random variable, one needs to assume a prior over a set with $2$ elements.
Consequently, through MP theorem the probabilities of different Bernoulli models can be calculated.
In the conventional approach, it is required to determine a prior distribution over the \textbf{real-valued} parameters of a Bernoulli distribution.
The conventional perspective ends up determining a prior on a disproportionately more complex set.
\textbf{(ii)} the models in our framework are not mutually exclusive and can have non-empty intersections.
It is intuitive to think that two Bernoulli models with parameters $0.9$ and $0.8$ should be related and do not represent disjoint events.
Conversely, in the conventional perspective, two models with the parameters $0.9$ and $0.8$ are treated as disjoint events.
\textbf{(iii)} Given the prior over the observables, in the MP framework the probabilities of models are determined by the characteristics of the likelihood functions. As such, per case analysis of likelihood functions are not needed.

We start by presenting MP theorem and its properties in Section \ref{sec:mindivg}.
The proofs for all the theorems can be found in Appendix A.
The connection of MP theorem to existing definitions and its interpretations are discussed in Section \ref{sec:interpret}.
Finally, the Maximum Probability Framework and examples of objective functions are presented in Section \ref{sec:modelsoracle}.
The detailed derivations for Section \ref{sec:modelsoracle} are presented in Appendix B and Appendix C.
\subsection{Background}
 The background for this topic also known as Objective Bayesian is broad enough to restrict the authors to include many important works.
 We refer the reader to the comprehensive review papers by Kass and Wasserman \cite{kass1996selection} and Consonni et al. \cite{consonni2018prior}.
 Here, we briefly review some of the important works that most impacted the objective Bayesian topic.

\textbf{Laplace's principle} is one of the early works to define priors leading to assuming a uniform prior over the possible outcomes of a random variable.
The downfall of this principle is seen in the case of real-valued random variables which leads to an \textit{improper} prior.
Aside from the impropriety of the prior, the approach is not invariant to reparameterization.   

\textbf{Principle of Maximum Entropy (MAXENT)}, introduced by Jaynes \cite{jaynes1957information,jaynes1957information2,jaynes2003probability}, provides a flexible framework in which testable information can be incorporated in the prior distribution of a random variable.
The prior is obtained by solving a constrained optimization problem in which the prior distribution with the highest entropy is chosen subject to the constraint of (i) testable prior information (usually represented in form of expectations) (ii) the prior integrating to 1. 
The shortcoming of MAXENT is in the case of continuous random variables, where the maximization of entropy is shown to be the minimization of the Kullback-Leibler divergence between the prior distribution and a base distribution which is obtained by Limiting Density of Discrete Points\cite{jaynes1957information,marsh2013introduction}.
The choice of the base measure is a similar problem to that of choosing the prior and as Kass and Wasserman \cite{kass1996selection} point out this is a circular nature in finding the maximal entropy distribution.
Furthermore, Seidenfeld in \cite{seidenfeld1979not} puts forward an example where MAXENT is not consistent with Bayesian updating.
In short, choosing a prior with MAXENT and obtaining the posterior with Bayes rule given some observations is not necessarily the same as choosing the prior with MAXENT given similar observations. 

\textbf{Jeffreys prior} is a class of priors that are invariant under one to one transformations.
Laplace's principle for finite-range random variables could be seen as a form of invariance under the permutation group.
The only distribution that is invariant under permutation of the states of a finite-range random variable is the uniform prior.
The general form of Jeffreys rule for the prior $p(\theta)$ over a one dimensional real parameter $\theta$ and a given likelihood function is $p(\theta) \propto \frac{1}{\sqrt{\det(\mathcal{I}(\theta))}}$, where $\mathcal{I}$ is the Fisher Information.
Jeffreys prior is constructed as a function of the likelihood function and does not provide any guideline for choosing the likelihood function over the observable .
This property is present in the context of \textit{Reference Priors} introduced by Bernardo \cite{bernardo1979reference} and further developed in \cite{berger2015overall,berger2009formal,berger1992development,berger1992ordered}.
Reference priors in the case of one dimensional parameter and some regularity conditions coincide with Jefferey's general rule \cite{kass1996selection}.

\textbf{Reference priors} construct the priors by maximizing the mutual information of the observable variable and the hidden variable (parameter).
The solution density is not necessarily a \textit{proper} prior but the usage is justified by showing that posterior is the limit case of posteriors obtained from proper priors\cite{berger2009formal}.
The solution density similar to Jeffreys prior is invariant under one to one transformations, which follows from invariance of the mutual information.
The mutual information between the observable and the parameter is a function of the likelihood function.
Thereby reference priors similar to the case of Jeffreys priors are only dependent on the likelihood function and do not determine a clear guide to construct likelihood functions. 
\subsection{Notation}
We use $\pr, P_V$ for probability measure, and the probability distribution of some random variable $V$, respectively.
$\logit, L_V$ corresponds to the logarithms of the probability measure and log-probability distribution of some random variable $V$, respectively.
We use $R(V)$ to represent the range of some random variable $V$ and $|R(V)|$ is the cardinality of the range of $V$.
For simplicity of notation and depending on the context, we use $\pr(v)$ as a shorthand notation of $\pr(V^{-1}(v))$. The lower case letters represent the outcomes of the random variable with the corresponding uppercase letter.
\section{Maximum Probability Theorem}\label{sec:mindivg}
The following theorem is the foundation of our work, bounding probabilities of \textit{events} using their conditional distribution. 
\begin{theorem}\label{thrm:Master}
Consider the probability space $(\Omega,\Sigma,\pr)$, the random variable $V$ with finite range $R(V)$ and $P_V(.)$ the probability distribution of $V$. For any event $\sigma \in \Sigma$ with the conditional distribution $P_{V|\sigma}(.)$ the following holds,

\begin{align}
    \pr(\sigma) \leq\underset{v\in R(V)}{\inf} \left \{ \frac{P_V(v)}{P_{V|\sigma}(v) }\right \} \triangleq \mathcal{P}(\sigma \succ V)
\end{align}
$\mathcal{P}(\sigma \succ V)$ is read as maximum probability of $\sigma$ observed by $V$ and $-\mathcal{L}(\sigma \succ V)= -\log\left(\mpr(\sigma \succ V)\right)$ is read as the minimum information in $\sigma$ observed by $V$.
\end{theorem}
The proof for Theorem \ref{thrm:Master} is short and simple, yet it is fundamental in understanding probabilistic models.
In this view, every probability distribution over the random variable $V$ corresponds to an event where the probability of the event is bounded using Theorem \ref{thrm:Master}.
The bound for probability in Theorem \ref{thrm:Master} can be decreased by extending random variables (Definition \ref{def:extendvar}).
\begin{definition}\label{def:preimage}
The preimage of an outcome $v$ of the random variable $V$, denoted by $V^{-1}(v)$, is defined as
\begin{align}
    V^{-1}(v)\triangleq\lbrace \omega |\omega \in \Omega , V(\omega) = v\rbrace\newline
    \quad, \forall v \in R(V)
\end{align}
Note that since $V$ is a measurable function, then $V^{-1}(v) \in \Sigma$.
\end{definition}
\begin{definition}\label{def:extendvar}
A random variable $W$ \textit{extends} random variable $V$ \textbf{iff}
\begin{align}
    &W^{-1}(w) \subset V^{-1}(v)\quad \textrm{or}\quad W^{-1}(w) \cap V^{-1}(v) =\emptyset\newline
    \notag\\&\forall w \in R(W), \forall v \in R(V).
\end{align}
\end{definition}
A random variable can be extended by increasing the cardinality of its range.
The following theorem shows that the probability upper bound is decreased by extending the random variables.
\begin{theorem}\label{thrm:ExtendedInfo}
For any random variable $W$ that extends $V$, the following inequality holds
\begin{align}
\mathcal{P}(\sigma\succ W) \leq \mathcal{P}(\sigma \succ V)
\end{align}
\end{theorem}
The simplest example of extending a random variables is by including additional random variables to describe the underlying event.
\begin{corollary}\label{remark:concatinfo}
For any random variable $V$ and $Z$ with concatenation $H=(V,Z)$ the following holds
\begin{align}
\mathcal{P}(\sigma\succ H) \leq \mathcal{P}(\sigma \succ V)
\end{align}
\end{corollary}
Theorem \ref{thrm:ExtendedInfo} shows that the Maximum Probability bound is relative to the complexity of the random variable. 
In simple terms, the random variables are tools to observe the underlying event.
By extending a random variable, its number of states is increased and the underlying event is more specified.
Since the event is more specified, its probability decreases.
The upper bound in (\ref{thrm:Master}) follows a similar logic.
Therefore the upper bound of probabilities of events is relative to the characteristics of the random variables.
We delve into the meaning of the upper bound in the maximum probability theorem in the next section.

\subsection{Interpretation of The Probability Upper Bound}\label{sec:interpret}
The upper bound in MP theorem have a concrete connection to existing definitions related to random variables. 
The upper bound nature in MP theorem is embedded in existing definitions of the probability of outcomes of random variables. 
To demonstrate the former statement, we review the following existing definition.
\begin{definition}\label{def:classicvar}
Probability of an outcome $v$ of random variable $V$ denoted by $P_V(v)$ is defined as
\begin{align}
    P_V(v) = \pr(V^{-1}(v)), \quad v \in R(V)
\end{align}
\end{definition}
Definition \ref{def:classicvar} defines the probability of an outcome of $v \in R(V)$ to be the probability of the largest event in the sense of number of elements $V^{-1}(v)$, that is mapped to $v$.
It is possible to show an equivalent definition in the sense of probability.
\begin{proposition}\label{def:maxprobdef}
Probability of an outcome $v$ of random variable $V$ in the sense of maximum probability defined as
\begin{align}
    P^*_V(v) \triangleq \sup\left\lbrace\pr(\sigma) | \sigma \in \Sigma , \forall \omega \in \sigma, V(\omega) = v  \right\rbrace,\quad  v \in R(V)
\end{align}
has the following property
\begin{align}
P^*_V(v) = P_V(v)\quad \forall v \in R(V).    
\end{align}
\end{proposition}
\begin{figure*}
\center
\subfigure[The Generic Setup]{
        \centering
        \includegraphics[scale=0.1306]{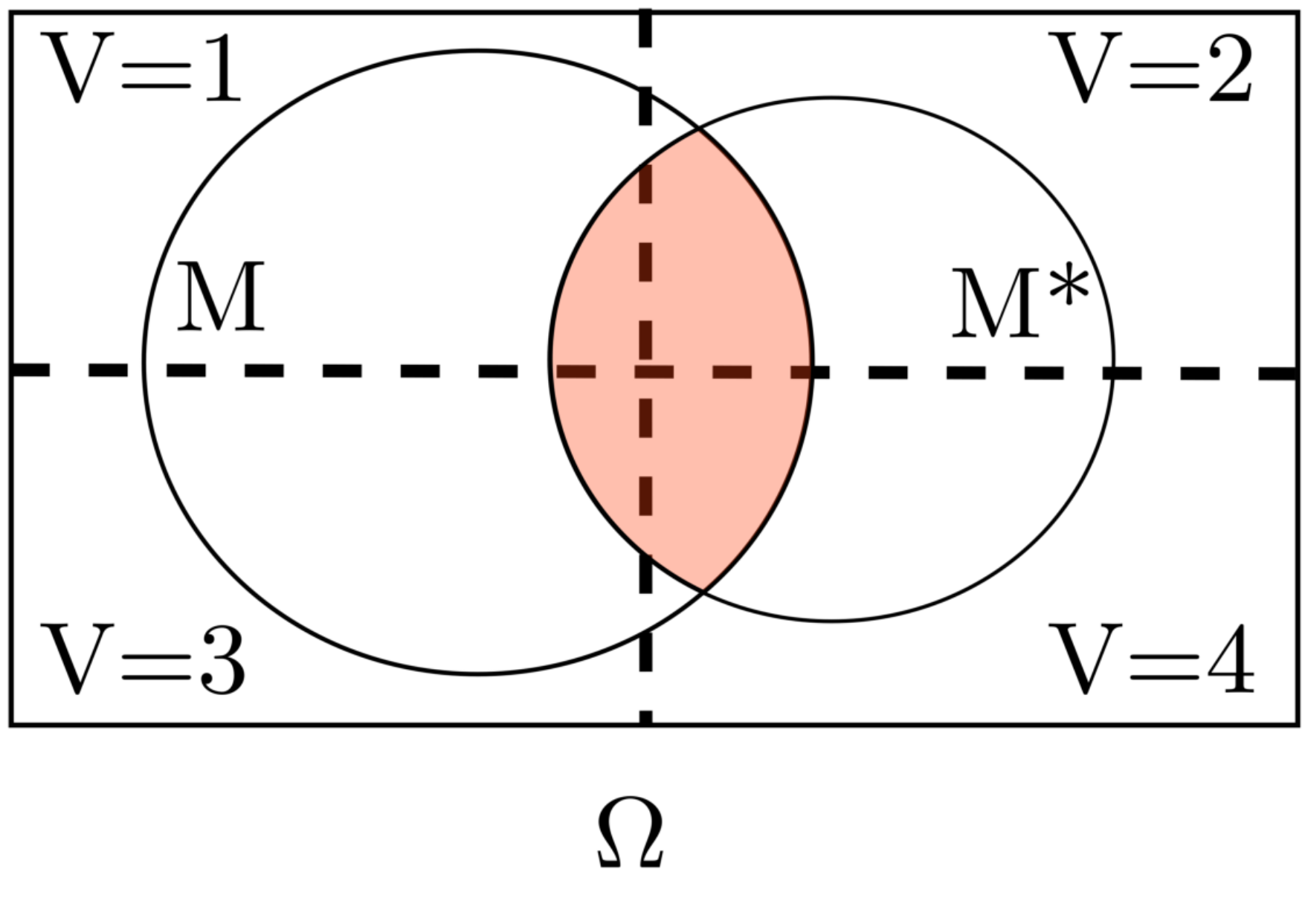}
        \label{fig:general}
        }
\subfigure[Likelihood Solution]{
        \centering
        \includegraphics[scale=0.1306]{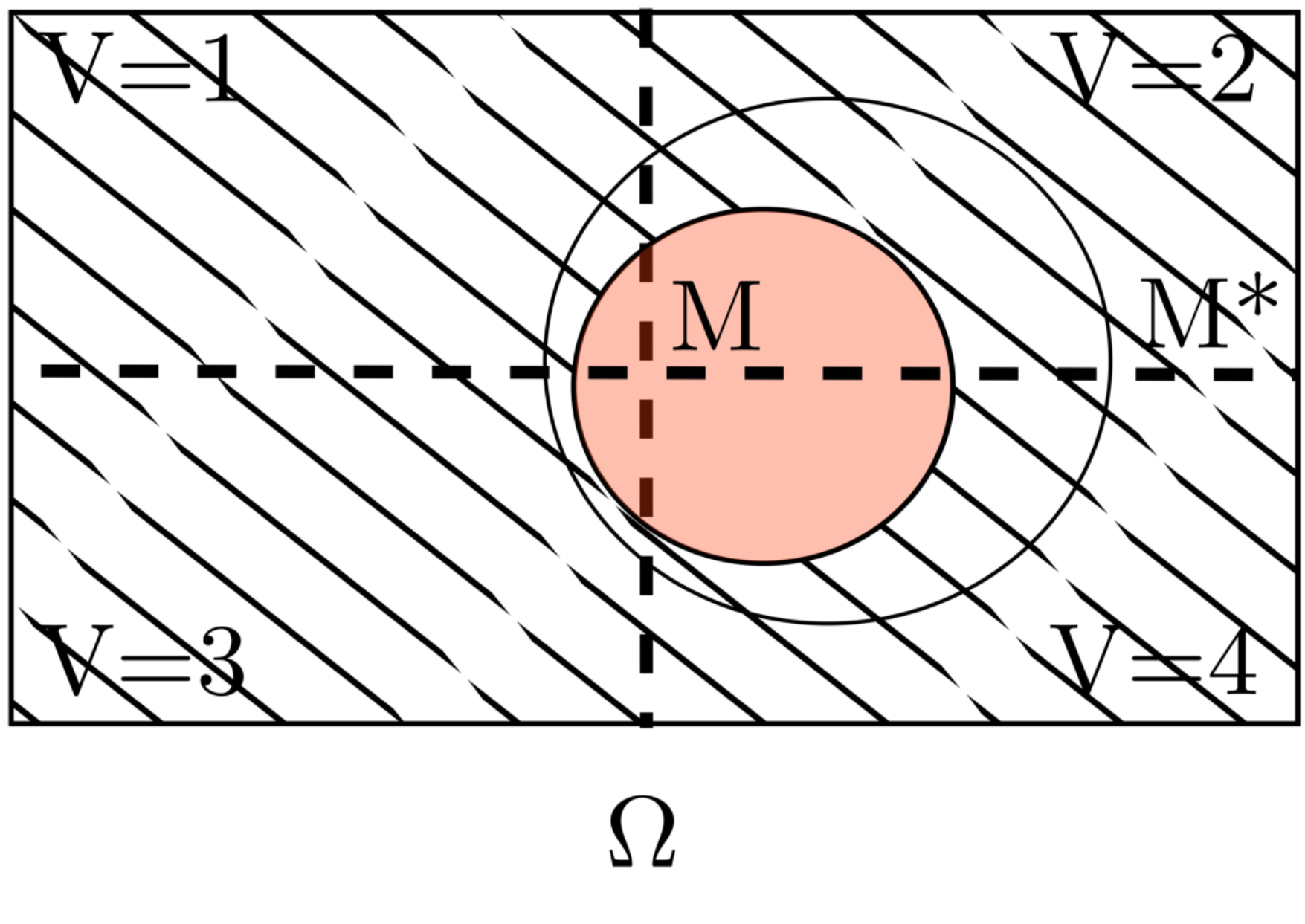}
        \label{fig:likelihoodsolution}
        }
\subfigure[Intersection Solution]{
        \centering
        \includegraphics[scale=0.1306]{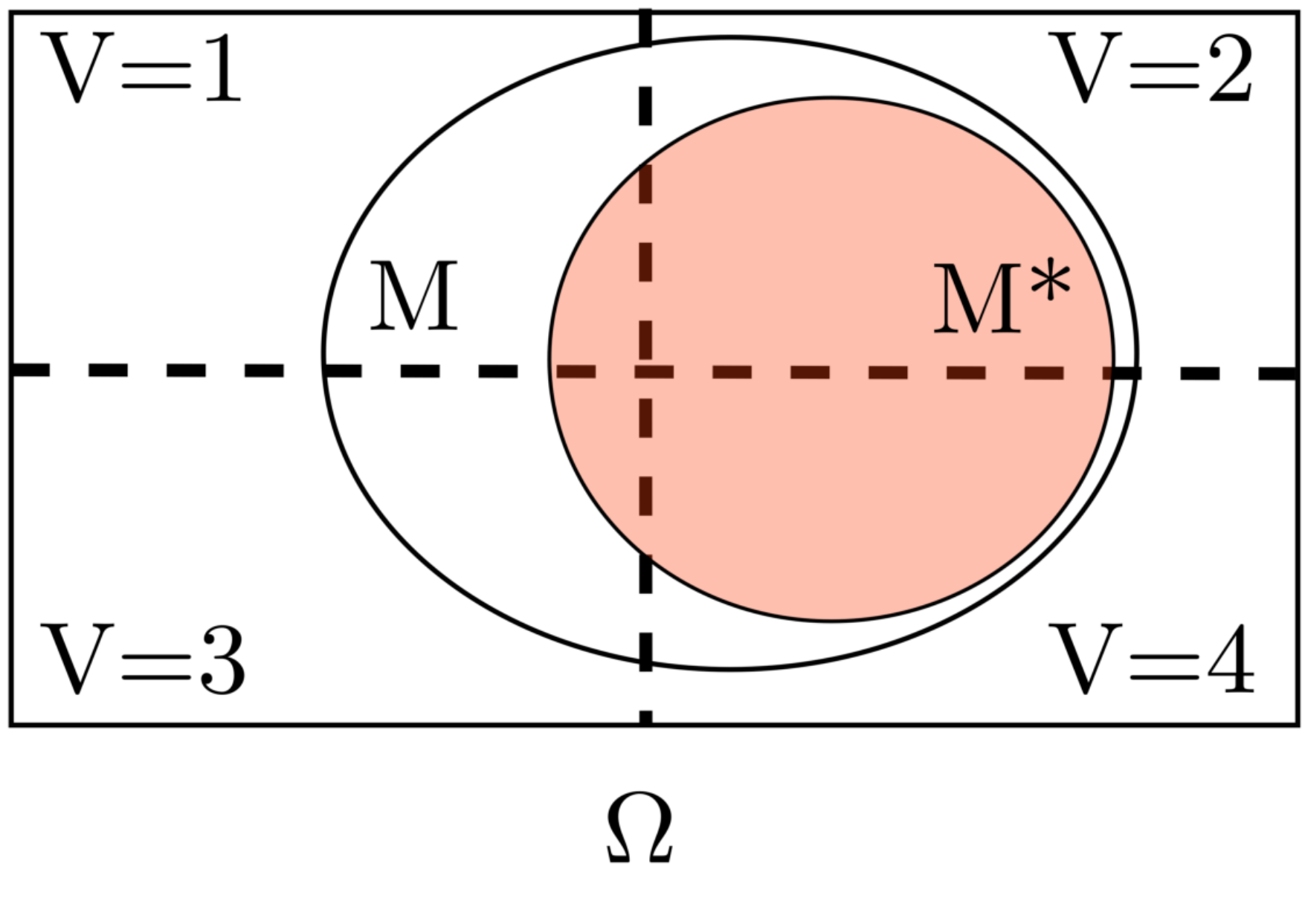}
        \label{fig:intersectionsolution}
    }       
\caption{The probability space in MP framework.
The model $M$ and the oracle $M^*$ are events in the probability space $(\Omega,\Sigma,\pr)$.
The dashed lines representing the random variable $V$ with $|R(V)|=4$, partitioning the probability space.
Probability of the model $M_\theta$ is dependent on $P_{V|M}$ and $P_V$ and is bounded by $\mathcal{P}(M\succ V)$. 
The parameters of the model $M$ is tuned to mimic the underlying model $M^*$.
\ref{fig:general} depicts the general setup with the red region representing $M\cap M^*$.
\ref{fig:likelihoodsolution} shows a candidate model $M$ achieving the global maximum $\pr(M^*| M)$.
The grey region in \ref{fig:likelihoodsolution} represents $\overline{M}$, which is normalized out in the likelihood objective function.
\ref{fig:intersectionsolution} shows a candidate model $M$ achieving the global maximum of intersection probability.
}\label{fig:delta}
\end{figure*}
Preimage of an outcome $v$ coincides with the most probable event being mapped to $v$ because of the monotonicity of the probability measure.
While many events can be mapped to the outcome $v$, both definitions consider the \textbf{largest} underlying event; either in the sense of cardinality or probability.
We can interpret the upper bound of probability in spirit of $P^*_V(v)$.
Given some information about the underlying event, we assume that the underlying event is the one with largest probability measure.
For example, having only the conditional probability distribution $P_{V|\sigma}$, we consider the upper bound as the probability of $\sigma$.
We can also use Information Theory to interpret the probability upper bound.
In Information Theory \cite{cover2012elements,mackay2003information,hansen2001model}, information content of an event $\sigma \in \Sigma$ is quantified as $-\logit(\sigma)$.
Information content of $\sigma$ is the minimum bits required to distinguish between $\sigma$ and its complement $\overline{\sigma}$.
As probability of $\sigma$ decreases, its information content increases and therefore requires further description.
Considering lower probability than the maximum bound is translated to more information content in an event; which is not based on the given information.
Considering lower probability may be interpreted as appending \textbf{assumptions} to the description of the observation.
\begin{corollary}\label{corollary:mprob_eqv}
Considering Definition \ref{def:classicvar} and Theorem \ref{thrm:Master}, given some set $\sigma_v\in \Sigma$ where $P_{V|\sigma_v}(v)=1$,
\begin{align}
P_V(v) = \mathcal{P}(\sigma_v \succ V)    
\end{align}
\end{corollary}
Corollary \ref{corollary:mprob_eqv} shows that $\mathcal{P}$ is equivalent to $P_V$ in the case of \textit{exact} observations of $V$.
We can define \textit{uncertain} observations as outcomes that are not completely determined.
Uncertain observations may be represented by a probability distribution, that is conditioned on some underlying event.
Exact observations may be considered as degenerate conditional distributions; special cases of uncertain observations.
Definition \ref{def:classicvar} fails to address the probability of uncertain observations of $V$, but $\mathcal{P}$ extends to uncertain observations.
We use an example to represent a scenario with uncertain observations and how MP theorem extends to such scenarios.
Imagine a fair coin is being flipped in a room. Alice asks Bob to investigate the room and tell her the outcome of the coin flip. Consider two scenarios:
    \textbf{(i)} Bob tells Alice that the coin is surely Head. Alice using existing definitions concludes that the probability of the event is $0.5$. The conclusion is similar if Alice uses $\mathcal{P}$ to calculate the probability. Alice can model the observation as a degenerate distribution conditioned on the event $\sigma_1$, i.e. $\pr(H|\sigma_1)$=1.
    The maximum probability of $\sigma_1$ is $\inf\lbrace \frac{0.5}{1},\frac{0.5}{0}\rbrace= 0.5$. 
    \textbf{(ii)} Bob observes some underlying event and using Bayes rule comes up with the conclusion that the coin is Head with probability $0.9$.
    Bob informs Alice about his conclusion.
    Alice cannot use existing definitions to calculate the probability of the underlying event.
    However using the Maximum Probability bound, she concludes that the probability of the evidence that Bob observed is at most $\inf\lbrace\frac{0.5}{0.9},\frac{0.5}{0.1}\rbrace=\frac{5}{9}$.
\section{Maximum Probability Framework}\label{sec:modelsoracle}
In the current trend of Bayesian statistics and machine learning, a parameterized family of distributions is assumed.
Subsequently, the underlying model is estimated by finding the parameter setting maximizing the posterior distribution, or creating an ensemble of models with parameters drawn from the posterior. 
Other alternatives include Variational Inference \cite{blei2017variational,ranganath2014black,jordan1999introduction}, where an approximation for the posterior is chosen from a parametric family. The posterior approximation is chosen by finding the distribution in the family with minimum Kullback-Leibler Divergence to the true posterior. All of the mentioned approaches treat the parameters as a random variable and require a prior distribution over the parameter space.

We introduce the Maximum Probability framework (MP framework) for probabilistic machine learning as a corollary of the MP theorem.
In MP Framework, we define a \textit{model} as an event, $M_\theta \in\Sigma$, where $\theta \in \mathbb{R}^d$ is the parameter, and the conditional distribution of the random variable $V$ given the model is $P_{V|M_\theta}$. 
Also, the true underlying model or \textit{oracle} is represented as $M^*\in \Sigma$, with the conditional distribution $P_{V|M^*}$.
$M_\theta$ and $M^*$ are not explicitly defined.
Instead, the conditional distribution of the model is obtained through some explicit and deterministic \textit{parameterization} function $\pi$, i.e. $P_{V|M_{\theta}}(v) = \pi(v,\theta)$. 
Also, the oracle can be understood as an observation from a generative process.
For example we can define the conditional distribution of oracle given some observation $v_o\in R(V)$, as $P_{V|M^*}(v_o)=1$.
Note that the random variable $V$ could be modeled as the concatenation of multiple i.i.d random variables $(V^{(i)})_{i=1}^N$.
In the i.i.d modeling case $V$ would represent a random variable corresponding to multiple observations, that conditioned on the oracle are independent.
The choice of modeling for the random variables is arbitrary and we focus on $V$ as a generic random variable.

Given a prior distribution $P_V$ and through maximum probability theorem, we can calculate the probability of models under different parameter settings. 
As opposed to the conventional approach where a probability \textbf{density} is assumed on the parameters - an uncountable infinite set - the prior is chosen over the finite range of $V$.
Furthermore, in the MP framework each model has a probability \textbf{mass} and models potentially have intersections in the underlying probability space.
The visual representation of the underlying probability space containing the model and the oracle is depicted in Fig \ref{fig:delta}.
The goal is to increase the similarity of the underlying events corresponding to the model and the oracle by tuning the parameters.
Since the model and the oracle are sets, we can construct objective functions using set operations between the model and the oracle.
We explore maximizing two objective functions to demonstrate properties of each, i.e. \textit{log-likelihood} and \textit{intersection} corresponding to $\logit(M^*|M)$ and $\logit(M^*\cap M)$.
The probability of the outcome of such set operations reflects the similarity between the model and the oracle.

\subsection{Log-Likelihood Objective Function}
In this section, we investigate maximization of the log-likelihood of the oracle given the model, or $\logit(M^*|M_\theta)$.
We start by representing $\logit(M^*|M_\theta)$ as the marginalization of the random variable $V$
\begin{align}
    \logit(M^*|M) = \log \Bigg(\summe{v\in R(V)}{}&\pr(M^*|v,M)\pr(v|M)\Bigg)\nonumber\\
    &-\log\Bigg(\summe{v'\in R(V)}{}\pr(v'|M)\Bigg).\label{eq:loglikelihoodmarginal}
\end{align}{}
Taking the partial derivative of the log-likelihood with respect to $\logit(v|M_\theta)$
\begin{align}
    \partder{\logit(M^*|M_\theta)}{\logit(v|M_\theta)} &= \frac{\pr(M^*|v,M_\theta)\pr(v|M_\theta)}{\pr(M^*|M_\theta)} - \pr(v|M_\theta)\nonumber\\&=\pr(v|M^*,M_\theta) - \pr(v|M_\theta) .\label{eq:indpt_likelihoodexplicitform}
\end{align}{}
and setting the partial derivative to zero, we obtain the condition
\begin{align}
    \pr(v|M^*,M_\theta) = \pr(v|M_\theta).\label{eq:loglikelihood_sufficient}
\end{align}{}
Considering the chain rule for gradients, any parameter setting $\hat{\theta}$, that satisfies the condition in (\ref{eq:loglikelihood_sufficient}), is necessarily a critical point of the objective function.
The gradient of the log-likelihood objective function in (\ref{eq:indpt_likelihoodexplicitform}) is the subtraction of two probability vectors.
The stochastic nature of the gradient is suitable for stochastic gradient optimization of the parameters, especially for variables with large number of states.

Note that in (\ref{eq:loglikelihoodmarginal}), $\pr(M^*|v,M)$ is not determined by neither the conditional distribution of the model nor the oracle.
We assume that the oracle and the model given any outcome of the variable $V$ are conditionally independent, i.e. $\pr(M^*|v,M_\theta) = \pr(M^*|v)$.
The model and the oracle are defined implicitly through their conditional distribution on $V$.
Thereby it is natural in the problem of learning to assume that given the outcome of the random variable, $M$ does not convey information about $M^*$; hence the conditional independence.
Nevertheless, other alternatives can be assumed and derived separately. 
\begin{figure*}[t]
\center
\includegraphics[scale=0.45]{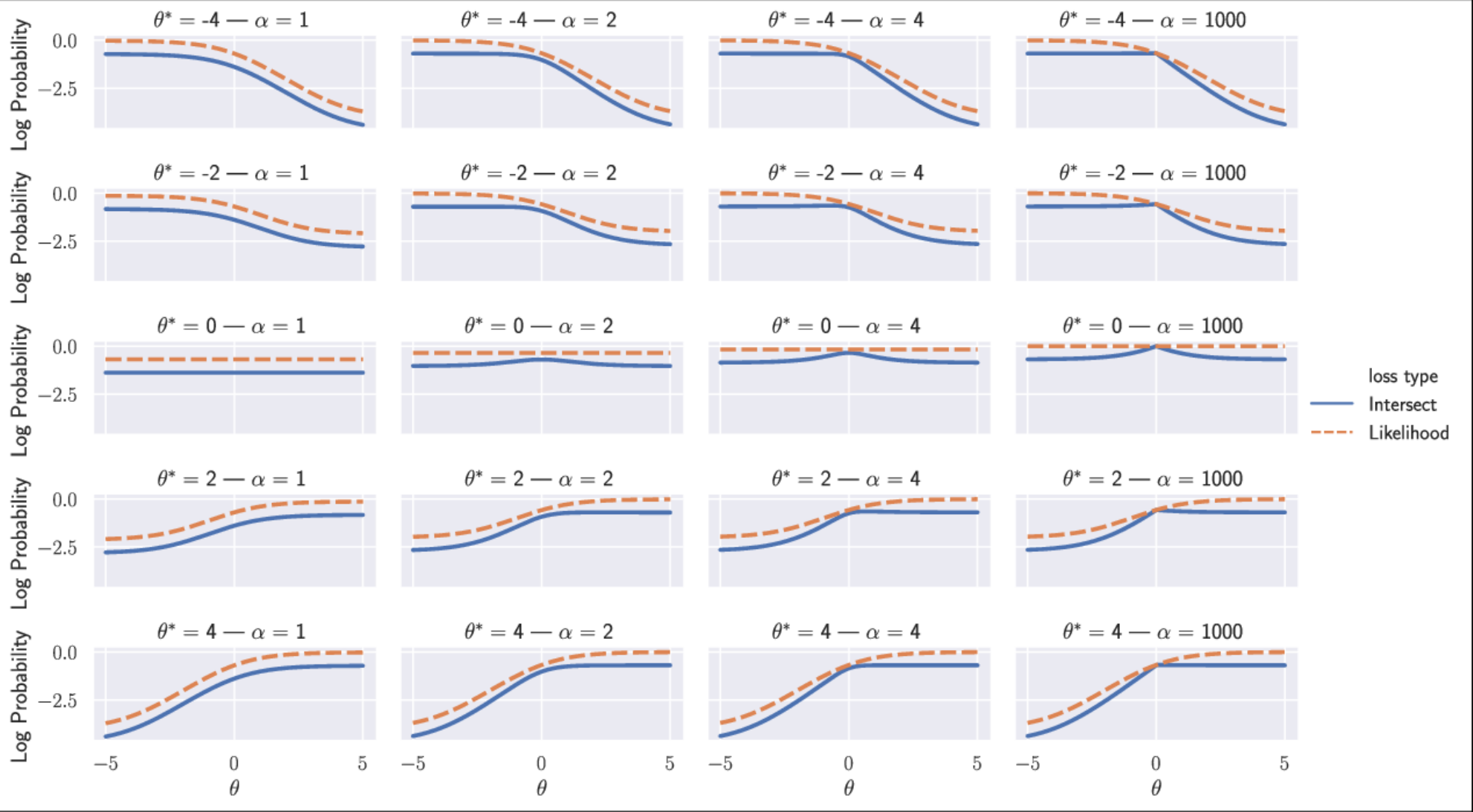}
\caption{Comparison of the objective functions in the Bernoulli example under conditional \textbf{independence} assumption. $\theta^*$ represents the oracle's log probability ratio, $\theta$ represents the model's log probability ratio.  In $\alpha=1$ the objective functions become similar. The intersection has a unique solution for $\alpha>1$. As $\alpha\to \infty$, the solution approaches the trivial solution and tends to lose its uniqueness. }\label{fig:lossindp}
\end{figure*}
The models maximizing the log-likelihood objective function have a special characteristic; their probability distribution is concentrated on the most probable outcome(s) of $P_{V|M^*}$.
The formal description and proof of this claim are brought in Appendix B.
The solution of optimization being a degenerate distribution is the analogue of overfitting in our framework. 
We show in the next section that the intersection objective function includes the log-probability of the model in the objective function.
Considering the model's log-probability, as will be shown, induces regularizing effects on the final solution by preventing the final distribution from being degenerate.
Thereby intersection objective function prevents the overfitting problem that occurs in the log-likelihood objective function.
\subsection{Intersection Objective Function}
Similar to our analysis of the log-likelihood objective function we focus on the properties of the solutions of intersection objective function.
The log-probability of the intersection of the model and the oracle can be written as 
\begin{equation}
    \logit(M^*,M_\theta) = \logit(M^*|M_\theta)+\logit(M_\theta)\leq \logit(M^*|M_\theta)+\mlogit(M_\theta\succ V).\label{eq:intersect1}
\end{equation}
The maximum probability of the model $\mlogit(M_\theta)$ in (\ref{eq:intersect1}), can be substituted with the logarithm of the so called \textit{softmax probability} family, $\mlogit_\alpha(M_\theta)$.
The following proposition defines the family and shows its important property.
\begin{proposition}\label{prop:alphafamily}
For the softmax probability family of functions $\mathcal{P}_\alpha$ defined as
\begin{align}
    &\mathcal{P}_\alpha(\sigma\succ V) \triangleq \left(\summe{v\in R(V)}{\space}\left(\frac{P_V(v)}{P_{V|\sigma}(v)}\right)^{-\alpha}\right)^{-\frac{1}{\alpha}},\quad \alpha >0,
    \end{align}
    the following is true,
    \begin{align}
    &\mathcal{P}_\alpha(\sigma\succ V)\leq \mathcal{P}(\sigma \succ V) ,\quad \forall \alpha> 0,\forall \sigma \in \Sigma,
\end{align}
and the equality holds as $\alpha \to +\infty$. The softmin information family is defined as
\begin{align}
-\mathcal{L}_{\alpha}(\sigma\succ V) \triangleq -\log\left(\mathcal{P}_\alpha(\sigma\succ V) \right).    
\end{align}
\end{proposition}
We define a flexible family of objective functions $\mlogit_\alpha(M^*,M)$ by substituting the softmin information family $\mlogit_\alpha$ instead of $\mlogit$ in (\ref{eq:intersect1})
\begin{equation}
 \logit_\alpha(M^*,M_\theta) \triangleq \logit(M^*|M_\theta) + \mlogit_\alpha(M_\theta\succ V).\label{eq:objectivefamilyintersect}   
\end{equation}
Without loss of generality we assume a uniform prior over $V$.
Since for all $v$, $P_V(v)=1/|R(V)|$ is a constant we can represent $\mlogit_\alpha(M^*,M_\theta)$ as
\begin{align}
    \mlogit_\alpha&(M^*,M_\theta) = \log\left(\summe{v\in R(V)}{}\pr(M^*,v|M_\theta)\right)\nonumber\\
    &- \frac{1}{\alpha}\log\left(\summe{v'\in R(V)}{}\pr(v'|M_\theta)^{\alpha}\right) + \frac{1}{\alpha}\log(|R(V)|),
\end{align}
Taking the partial derivative with respect to the log-probabilities we get
\begin{align}
    \partder{\mlogit_\alpha(M^*,M_\theta)}{\mlogit_\alpha(v|M_\theta)} &= \pr(v|M_\theta,M^*) - \frac{\pr(v|M_\theta)^\alpha}{\summe{v'\in R(V)}{}\pr(v'|M_\theta)^{\alpha}}.\label{eq:intersect2}
\end{align}
We call the second term in the right hand side of (\ref{eq:intersect2}) as $\alpha$-skeleton of a distribution for future references. 
\begin{definition}
\textbf{($\alpha$-skeleton) }given the conditional distribution $\pr(v|\sigma), \sigma\in\Sigma$, the $\alpha$-skeleton distribution of $V$ given $\sigma$ is defined as
\begin{align}
    \pr_\alpha(v|\sigma) \triangleq \frac{\pr(v|\sigma)^\alpha}{\summe{v'\in R(V)}{}\pr(v'|\sigma)^\alpha}, \quad \alpha \in \mathbb{R}.
\end{align}{}
\end{definition}{}
$\alpha$-skeletons concentrate probabilities of distributions depending on the sign and magnitude of $\alpha$.
In the limit case as $\alpha\to +\infty$ the $\alpha$-skeleton of a distribution concentrates the probability measure on the most probable outcome or outcomes. 
Setting the partial derivatives in (\ref{eq:intersect2}) to zero, we conclude the following sufficient condition for the critical points
\begin{align}
    \pr(v|M_\theta,M^*) =  \pr_\alpha(v|M_\theta).\label{eq:solintersect}
\end{align}
\begin{figure*}[t]
\center
\includegraphics[scale=0.45]{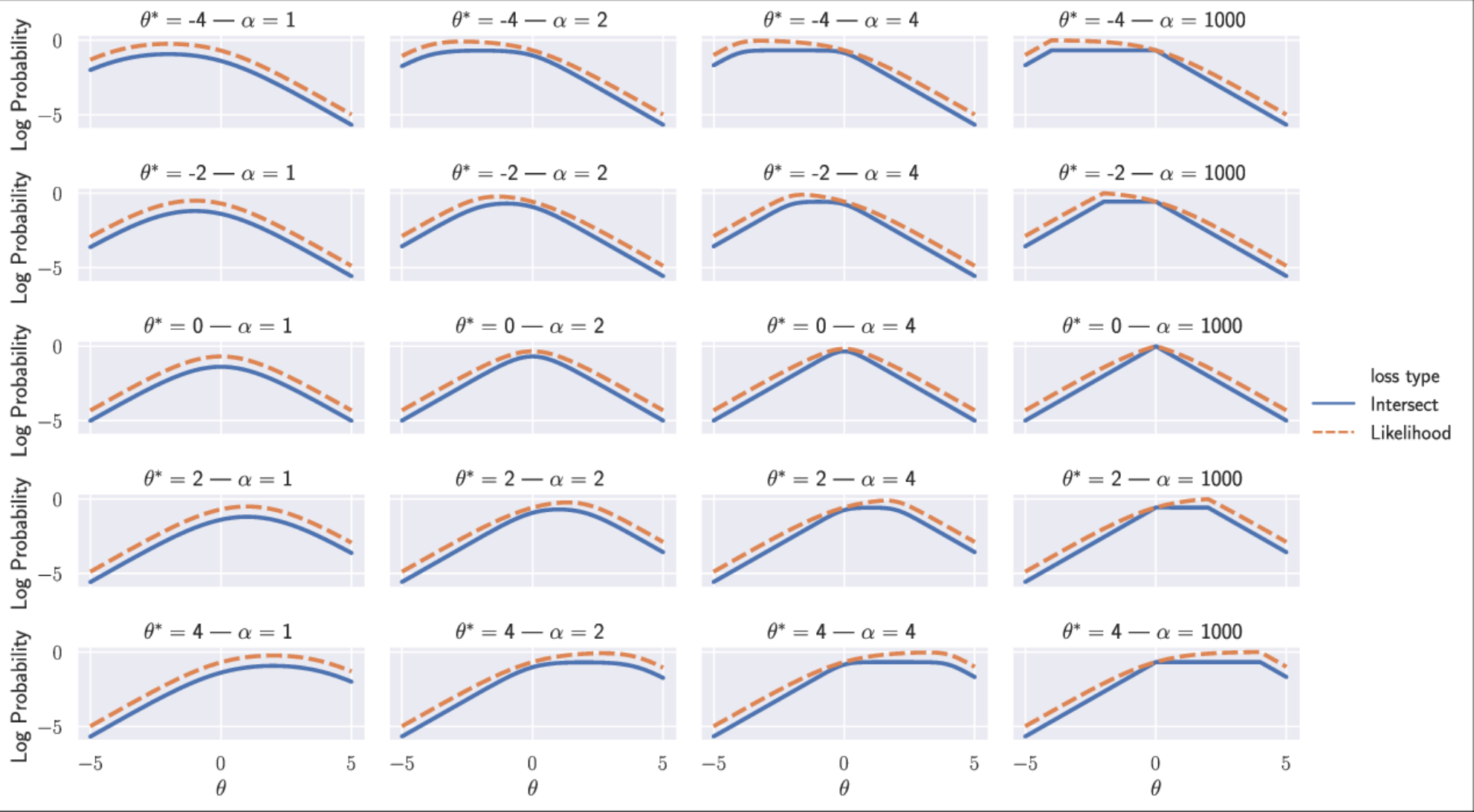}
\caption{Comparison of intersection and log-likelihood objective functions assuming that $M^*\subset M$. with respect to the parameters of the model for different values of $\alpha$. Note that $\theta^*= \logit(1|M^*)-\logit(0|M^*)$ and $\theta= \logit(1|M_\theta)-\logit(0|M_\theta)$. }\label{fig:LossSubset}
\end{figure*}
Equation (\ref{eq:solintersect}) is the sufficient condition and is satisfied when the $\alpha$-skeleton of the model distribution matches the posterior distribution.
The $\alpha$-skeleton term in (\ref{eq:solintersect}) acts as a regularizer suppressing the peaks in the model's distribution.
Similar to the log-likelihood case, both terms in (\ref{eq:intersect2}) are probability vectors and can be Monte Carlo approximated for stochastic optimizations.
Note that in the case of uniform priors and $\alpha=1$, the log-likelihood and the intersection objective functions are equal up to an additive constant. 
Setting $\alpha=2$ has a significant property that the solution \textbf{coincides} with oracle's likelihood function.
We can check the former claim by substituting $\pr(v|M_\theta)$ with $\pr(v|M^*)$ in (\ref{eq:solintersect}) (see Appendix C).
As $\alpha$ increases, the final solution is flatter than the solution of the likelihood objective function.
In the case of $\alpha \to +\infty$, the solution tends to the prior distribution $P_V$.
This is consistent with our visual representation in Fig \ref{fig:intersectionsolution}.
Setting $M_\theta =\Omega$ is the trivial solution for maximizing the intersection in the underlying space.
In general $\alpha$ as a hyperparameter, controls the regularization effect.
The usage examples of the MP Framework are presented in the next section.
\section{Examples}
In this section we present application of MP Framework on two examples: (i) A Bernoulli random variable (ii) Convolutional Neural Networks (CNNs).
In the simple case of Bernoulli random variable, we plot the objective functions to visualize their properties. Additionally we examine an alternative to the assumption that the model and the oracle are conditionally independent.
The alternative assumption is that the oracle is a subset of the model.
As a practical example we apply MP framework to CNNs.
we show that log-likelihood objective function and the conventional cross entropy loss are similar.
We further extend our derivation to intersection loss function and obtain a regularization term in the loss. 
The training of a CNN with intersection objective function for different values of $\alpha$ are presented and compared with the cross entropy loss with $\ell_2$ regularization.

\subsection{Bernoulli Random Variable}
 Consider the Bernoulli random variable $B$ with range $R(B) = \lbrace 0,1\rbrace$. Assume that the oracle $M^*$ has the log probability ratio $\theta^* = \logit(1|M^*)-\logit(0|M^*)$ and the model $M_\theta$ is parameterized by $\theta=\logit(B=1|M_\theta)-\logit(B=0|M_\theta)$. 
Note that $\theta$ and $\theta^*$ fully characterize the probability distributions using the sigmoid parameterization function, $\pr(1|M_\theta) = \frac{1}{1+e^{-\theta}}$.
We assume that $B$ has a uniform distribution over the sample space, i.e. $\pr(1)=0.5$. 
We can plot the behaviour of these objective functions with respect to the parameters $\theta$ and $\theta^*$.
Note that the only value that is not explicitly known is $\pr(M^*,M_\theta|v)$ which by assuming conditional independence can be calculated in the following manner.
\begin{align}
    \pr(M^*,M_\theta|v) = \pr(M^*|v)\pr(M_\theta|v)\\
    \pr(M^*|v) = \pr(v|M^*)\pr(M^*)/\pr(v)
\end{align}
The visualization is presented in Fig \ref{fig:lossindp}.
The maxima of the likelihood function is always achieved in the limit case while the solution for the intersection function is finite and unique. The uniqueness of the solution for intersection can be understood by following the plots as $\theta^*$ changes. 

\textbf{Dependence Assumption:}
To demonstrate another example instead of conditional independence assumption, we can assume that $M^*\subset M_\theta$ and further analyze the behaviour of the objective functions. 
Since $M^*\subset M_\theta$ we can derive the log-likelihood and intersection objective functions using MP theorem and the softmax probability family.
\begin{align}
    &\mathcal{L}_\alpha(M^*|M_\theta) \triangleq -\frac{1}{\alpha}\log\left(\summe{v\in R(V)}{\space}e^{-\alpha(\mlogit_\alpha(v|M_\theta)-\mlogit_\alpha(v|M^*))}\right)\\
    &\mathcal{L}_\alpha(M^*,M_\theta) = \mathcal{L}_\alpha(M^*|M_\theta)+  \mathcal{L}_\alpha(M_\theta)
    \end{align}{}
The behaviour of the objective functions for the Bernoulli example is presented in Fig \ref{fig:LossSubset}.
It is noteworthy that in contrast to the independence assumption the objective functions are well behaved for optimization purposes. In particular in Fig \ref{fig:LossSubset} intersection and likelihood objective functions are concave and the gradients do not vanish in the limit cases. In the case of intersection, the objective function tends to become flat in the open set region between the value of the true underlying parameter and the parameter value of the prior distribution as $\alpha \to \infty$.
\subsection{Application to CNNs}
Here we demonstrate the usage of the MP framework in the context of CNNs and the task of image classification.
In image classification, the set of labeled data $\lbrace z^{(i)}\triangleq(x^{(i)},y^{(i)})\rbrace_{i=1}^N$ is given, where $x^{(i)}\in \mathbb{R}^n$ is the image and $y^{(i)}\in \lbrace 1,\dots, |R(Y)|\rbrace$ is the corresponding label.
To formulate the problem in a probabilistic sense, consider the probability space $(\Omega, \Sigma, \pr)$.
We consider the random variable $Y^{(i)}$ with range $R(Y)$ corresponding to the $i$-th label.
Similarly $X^{(i)}$ is the random variable corresponding to the $i$-th image and $Z^{(i)}$ is the random variable obtained by concatenating the image and the label.
We denote the range of all the image random variables by $R(X)$ and the labels as $R(Y)$.
The CNN model with parameters $\theta\in \mathbb{R}^d$ is the function $f:R(X)\times \mathbb{R}^d\to \Delta^{|R(Y)|}$, where $\Delta^{|R(Y)|}$ is the $|R(Y)|$-dimensional probability simplex.
In the probabilistic sense, the CNN is modeled as $M_\theta\in \Sigma$, where $\pr(y|M_\theta,x^{(i)})$ is the $y$-th component of  $f(x^{(i)},\theta)$.
Since the CNN as a classifier does not model the input distribution, we assume for all $x\in R(X)$ that $\pr(x|M_\theta) = \pr(x)$.
Furthermore we assumed that $\pr(x)$ is equal to the empirical distribution for mathematical convenience.
We define the oracle $M^*$ to characterize the training data, namely 
\begin{align}
\pr(z^{(1)},\dots,z^{(N)}|M^*)=1.
\end{align}
\subsubsection{Log-Likelihood Objective Function}
The log-likelihood objective function, assuming that the oracle and the CNN model are independent conditioned on the observables, can be written as
\begin{align}
    &\logit(M^*|M_\theta)= \nonumber\\
    & \log \left(\summe{z_1,\dots,z_{N}\in R(Z)}{}\pr(M^*|z_1,\dots,z_{N})\pr(z_1,\dots,z_{N}|M_\theta) \right).\label{appeq:cnnll:1}
\end{align}
The CNN model determines the label of any given image independent of the rest of the images. Therefore, we can incorporate the former property as conditional independence of observables
\begin{align}
    \pr(z_1,\dots,z_{N}|M_\theta) = \prod_{i=1}^N \pr(z_i|M_\theta).\label{appeq:cnnll:0}
\end{align}
Considering (\ref{appeq:cnnll:0}) and using the Bayes' rule we can rewrite (\ref{appeq:cnnll:1}) as
\begin{align}
     &\logit(M^*|M_\theta)=\nonumber\\
     & \log \left(\summe{z_1,\dots,z_{N}\in R(Z)}{}\frac{\pr(z_1,\dots,z_N|M^*)\pr(M^*)}{\pr(z_1,\dots,z_N)}\prod_{i=1}^N \pr(z^{(i)}|M_\theta)\right)\label{appeq:cnnll:2}
     \end{align}
     Since $\pr(M^*|z_1,\dots,z_{N})$ is non zero only when $z_1= z^{(1)},\dots , z_N = z^{(N)}$, we can simplify the summation term and obtain
     \begin{align}
     &\logit(M^*|M_\theta)=\nonumber\\
     & \log \left(\frac{\pr(z^{(1)},\dots,z^{(N)}|M^*)\pr(M^*)}{\pr(z^{(1)},\dots,z^{(N)})}\prod_{i=1}^N \pr(z^{(i)}|M_\theta) \right)\label{appeq:cnnll:2}\\
    & =  \left(\summe{i=1}{N} \log \pr(z^{(i)}|M_\theta)\right) +\logit(M^*) - \logit (z^{(1)},\dots,z^{(N)})\\
    &=  \left(\summe{i=1}{N} \log \pr(y^{(i)},x^{(i)}|M_\theta)\right) +\logit(M^*) - \logit (z^{(1)},\dots,z^{(N)})\\
    &=  \left(\summe{i=1}{N} \log \pr(y^{(i)}|M_\theta,x^{(i)})\right) +\left(\summe{i=1}{N} \log \pr(x^{(i)}|M_\theta)\right)\nonumber\\
    &+ \logit(M^*) - \logit (z^{(1)},\dots,z^{(N)}).\label{appeq:cnn:finalll}
\end{align}
The CNN only determines the first term in (\ref{appeq:cnn:finalll}).
Therefore the log-likelihood objective function effectively reduces to the so called Cross Entropy loss in CNNs (without regularization), i.e. the log probability of the correct label given the model and the image.
\begin{figure*}[t]
\center
\includegraphics[scale=0.85]{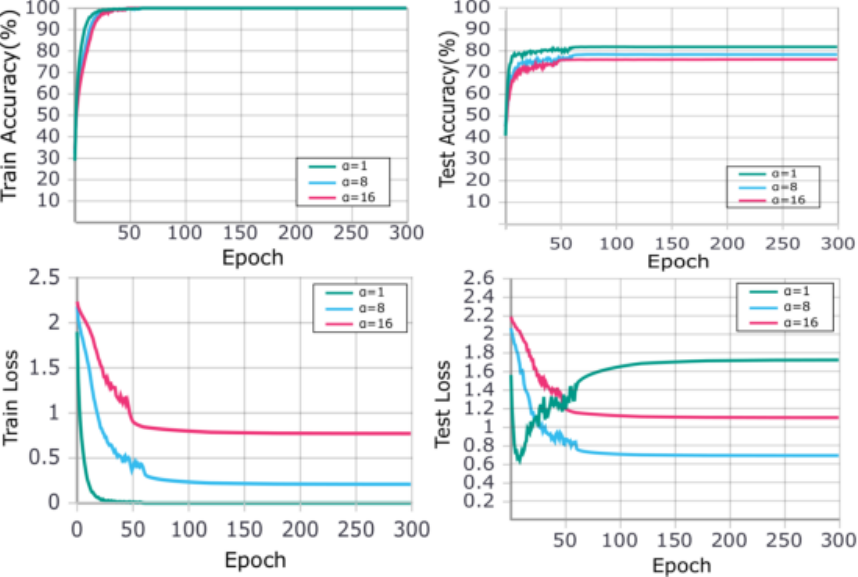}
\caption{The results of training a vanilla CNN using the intersection objective function. The loss represent the cross entropy loss (log-likelihood) commonly used in CNNs. }\label{fig:CNNTrainalpha}
\end{figure*}
\begin{figure*}[t]
\center
\includegraphics[scale=0.85]{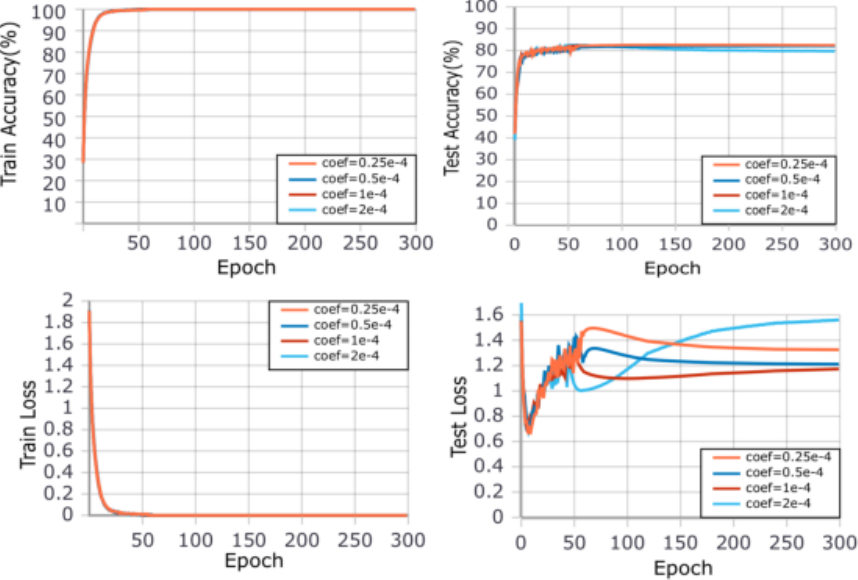}
\caption{The results of training a vanilla CNN with the conventional cross entropy loss and varying coefficient for $\ell_2$ regularization of parameters. }\label{fig:CNNTrainl2}
\end{figure*}
\subsubsection{Intersection Objective Function}
We can use the results obtained from the log-likelihood objective function to calculate the intersection objective function.
The intersection objective function is defined as 
\begin{align}
    \mlogit_\alpha(M^*,M) = \logit(M^*|M) + \mlogit_\alpha(M_\theta)
\end{align}
The log probability of the model can be calculated as 
\begin{align}
    &\mlogit_\alpha(M) = -\frac{1}{\alpha}\log\left(\summe{z_1,\dots,z_N\in R(Z)}{}\left(\frac{\pr(z_1,\dots,z_N)}{\pr(z_1,\dots,z_N|M_\theta)}\right)^{-\alpha} \right).\label{appeq:cnn:probmod}
\end{align}
Note that $z^{(i)}$ and $z_i$ should not be confused. We represented the observations by $z^{(i)}$, while $z_i$ refers to the values iterated in the summation.
Assuming the following property about the prior
\begin{align}
    &\pr(z_1,\dots,z_N) = \prod_{i=1}^N \pr(z_{i}),\\
    &\pr(z_{i}) = \pr(x_{i},y_{i}) = \pr(x_{i})\pr(y_{i})\\
    &\pr(y_{i}) = \frac{1}{R(Y)}
\end{align}
and the property in (\ref{appeq:cnnll:0}) we can write (\ref{appeq:cnn:probmod}) as 
\begin{align}
    \mlogit_\alpha(M) &= -\frac{1}{\alpha}\log\left(\summe{z_1,\dots,z_N\in R(Z)}{}\prod_{i=1}^N\left(\frac{\pr(z_i)}{\pr(z_i|M_\theta)}\right)^{-\alpha} \right)\\
    & = -\frac{1}{\alpha}\log\left(\prod_{i=1}^N\summe{z_i\in R(Z)}{}\left(\frac{\pr(z_i)}{\pr(z_i|M_\theta)}\right)^{-\alpha} \right)\\
    &=  -\frac{1}{\alpha}\summe{i=1}{N}\log\left(\summe{z_i\in R(Z)}{}\left(\frac{\pr(z_i)}{\pr(z_i|M_\theta)}\right)^{-\alpha} \right)\\
    =  -\frac{1}{\alpha}&\summe{i=1}{N}\log\left(\summe{\substack{x_i\in R(X)\\ y_i\in R(Y)} }{}\left(\frac{\pr(x_i)\pr(y_i)}{\pr(y_i|x_i,M_\theta)\pr(x_i|M_\theta)}\right)^{-\alpha} \right)
    \end{align}
Since $\pr(x|M_\theta)=\pr(x)$ and equal to the empirical distribution we can further simplify the log probability of the model into
\begin{align}
    \mlogit_\alpha(M_\theta) = -\frac{1}{\alpha}\summe{i=1}{N}\log\left(\summe{\substack{ y_i\in R(Y)} }{}\pr(y_i|x^{(i)},M_\theta)^{\alpha} \right)\nonumber\\
    + N\log(|R(Y)|).
\end{align}
We conclude that the intersection objective function for the described CNN is 
\begin{align}
    &\mlogit(M_\theta, M^*) =\summe{i=1}{N} \log \pr(y^{(i)}|M_\theta,x^{(i)})\nonumber\\ &-\frac{1}{\alpha}\summe{i=1}{N}\log\left(\summe{\substack{y_i\in R(Y)} }{}\pr(y_i|x^{(i)},M_\theta)^{\alpha} \right) + \textrm{constant}.\label{eq:finalobjectivelosscnn}
\end{align}
Note that by setting $\alpha=1$, the intersection objective function will become equivalent to the log-likelihood objective function by considering that the normalization term appearing when we set $\alpha=1$, is implicit in the \textit{softmax layer}.
We can incorporate the intersection objective function by alternating the softmax layer and continue using the cross entropy (likelihood) loss. 
The softmax layer exponetiate and normalizes the input. We can generalize its functionality by defining the HyperNormalization layer (HN), $\textrm{HN}: \mathbb{R}^m \to \Delta^m$, as 
\begin{align}
    \textrm{HN}(h;\alpha)_i= \frac{e^{h_i}}{\sqrt[\alpha]{\summe{j=1}{m} e^{\alpha h_j}}},
\end{align}
where $\textrm{HN}(h;\alpha)_i$, is the $i$-th component of the output. Note that the functionality of the $\log(\textrm{HN}(h;\alpha)$ and the first two terms of \ref{eq:finalobjectivelosscnn} are similar. Also, by setting $\alpha=1$, the HyperNomalization layer becomes equivalent to the softmax layer.
\begin{table*}[t]
\caption{Quantitative comparison of $\ell_2$ weight regularization versus MP framework. The Regularization column shows the coefficient corresponding to each regularization type. The results of the best case epoch and the final epoch for each measurement is presented in the table. }
\begin{center}
\begin{tabular}{l c c c c c c c c}
\toprule
    \multicolumn{1}{c}{Regularization} & \multicolumn{2}{c}{Test Loss}&\multicolumn{2}{c}{Train Loss}& \multicolumn{2}{c}{Test Accuracy(\%)}&\multicolumn{2}{c}{Train Accuracy(\%)} \\
    \cmidrule(r{2.5pt}l{2.5pt}){1-1} \cmidrule(r{2.5pt}l{2.5pt}){2-3}\cmidrule(r{2.5pt}l{2.5pt}){4-5} \cmidrule(r{2.5pt}l{2.5pt}){6-7}\cmidrule(r{2.5pt}l{2.5pt}){8-9}\\
    &Best&Last&Best&Last&Best&Last&Best&Last\\
    \cmidrule(r{2.5pt}l{2.5pt}){2-2}\cmidrule(r{2.5pt}l{2.5pt}){3-3} \cmidrule(r{2.5pt}l{2.5pt}){4-4}\cmidrule(r{2.5pt}l{2.5pt}){5-5}\cmidrule(r{2.5pt}l{2.5pt}){6-6}\cmidrule(r{2.5pt}l{2.5pt}){7-7} \cmidrule(r{2.5pt}l{2.5pt}){8-8}\cmidrule(r{2.5pt}l{2.5pt}){9-9}\\
    $\alpha=1$&0.645&1.725&1e-6&1e-6&82.16&82.12&100&100\\
    $\alpha=8$&0.695&0.696&0.21&0.21&78.66&78.55&100&100\\
    $\alpha=16$&1.104&1.104&0.771&0.771&76.27&76.27&100&100\\
    $\ell_2=0.25$e-4&0.657&1.3259&7e-6&9e-6&82.58&82.43&100&100\\
    $\ell_2=0.5$e-4&0.693&1.211&2e-5&3e-5&82.29&82.43&100&100\\
    $\ell_2=1$e-4&0.673&1.173&3e-5&1e-4&82.58&82.18&100&100\\
    $\ell_2=2$e-4&0.673&1.651&8e-5&3e-4&82.58&79.63&100&100\\
    \bottomrule
\end{tabular}\label{table:results}
\end{center}
\end{table*}

We experimented the effects of using the intersection objective function with varying coefficients on CNNs.
The experiments were done on the CIFAR10 image dataset \cite{krizhevsky2009learning} labeled into 10 classes with 50000 training samples and 10000 test samples.
The cross entropy loss and accuracy for the performance of the CNN on the test set and the training set were measured.
Using the intersection objective function, the regularization effect is back-propagated through the network from the HyperNormalization layer.
Using $\alpha=1$, the objective function is similar to the cross entropy loss without regularization.
The effects of training a vanilla CNN (without BatchNorm, Dropout) using the intersection objective function with varying value of $\alpha$ is presented in Fig \ref{fig:CNNTrainalpha}. To compare the results with the conventional cross entropy loss and $\ell_2$ regularization (also known as weight decay) please refer to Fig \ref{fig:CNNTrainl2}.
The $\ell_2$ regularization loss could be obtained by following the conventional Bayesian framework, namely considering the parameters as a random variable. The $\ell_2$ regularization loss appears when considering the normal distribution as the prior distribution of the parameter random variables.
Also, the coefficient of the regularization loss is determined by the variance of the normal distribution assumed on the parameters.
The goal of the conventional Bayesian framework is to obtain the maximum a posteriori (MAP) estimate of the parameters.

By comparing the test loss among different values of $\alpha$, we can see that overfitting is prevented by increasing the value of $\alpha$, while affecting the test accuracy minimally.
Also, CNNs trained with $\alpha>1$, generalize better than the $\ell_2$ regularized network, when considering the test loss value.
In Fig \ref{fig:CNNTrainl2} the test loss of $\ell_2$ regularized networks is increased as the training continues. 
We can make a similar observation in Fig \ref{fig:CNNTrainalpha} for the un-regularized network with $\alpha=1$.
In Fig \ref{fig:CNNTrainl2} we can see that increasing the the coefficient of the $\ell_2$ regularization does not prevent the test loss increase.
The network with the largest $\ell_2$ coefficient achieves the largest error in the final epoch.
In contrast with the $\ell_2$ regularized networks, the test loss of the hypernormalized networks converges to a stable minima.
On the other hand $\ell_2$ regularization is more successful in generalization of accuracy but performs worse in generalization of loss.
The quantitative results of the experiments are shown in Table \ref{table:results}.
Table \ref{table:results} demonstrates the accuracy and loss of the tested methods during the best case epoch and the final epoch. 
Table \ref{table:results} shows that by increasing the value of $\alpha$, the gap between the train loss and test loss decreases.
Also, the the gap between the best and last epoch test loss decreases as $\alpha$ increases, which shows the stability of the generalization process. 
In the case of $\ell_2$ regularization, increasing the coefficient, does not affect the gap of training and test loss and mostly does not impact the test accuracy.

Note that the accuracy is a surrogate objective function and although useful, does not fully characterize the performance.
We hypothesize that the reason that test accuracy is not improved in our example (with intersection objective function), is because of the depth of the network.
The regularization effect may disappear similar to the effect of gradient vanishing.
Including more random variables in the middle layers and considering them in the objective function may improve the test accuracy results.
\section{Discussion}
In the current paper, we have presented and proved the MP theorem.
MP theorem quantifies the upper bound for probabilities of events by having their respective conditional distributions.
We showed that considering the upper bound as the probability of the event agrees with the existing definitions and extends our ability to quantify the probability of uncertain observations.
The MP theorem was used to define models, quantify their probability measure, and develop objective functions; resulting in the MP framework for probabilistic learning.
MP framework treats the parameterized model and the oracle as events in the underlying probability space.
Considering the underlying space enables using set operations to represent similarities between two events and construct objective functions.
We used the example of using likelihood and intersection as the objective function and showed the sufficient conditions of their solutions.

MP framework requires a prior distribution over the observable random variable while the choice of prior is not determined by the framework.
Thereby existing principles for determining priors need to be used in the framework e.g. MAXENT and Laplace's Principle.
The usage of the MP theorem is helpful since the prior only needs to be determined over the observable random variable.
As a corollary, the complexity of developing the prior distribution will be relative to the complexity of observable random variable.
It is common that we have prior information about the observables rather than the hidden random variables.
In such cases determining the prior over observable random variable are more convenient than the hidden random variables.
Our framework is developed for finite-range random variables, and the generalization to the continuous case is left for future works.

MP framework allows development and analysis of objective functions other than likelihood and intersection, e.g. Symmetric Difference of events.
The motivation to explore other objective functions is to avoid the trivial solutions that are inherently present in likelihood and intersection.
Note that the objective functions discussed so far have an important feature.
The gradient vectors of the log-likelihood and log-probability of the model are probability vectors. 
This property enables us to approximate the gradients by Monte Carlo approximation; \textbf{(i)} connecting this framework with stochastic optimization theory and techniques \textbf{ (ii)} enabling black-box optimization of probabilistic models by sampling from their corresponding likelihood functions \textbf{(iii)} scalability to factorized random variables having an exponentially large number of states. 
In the MP framework, the formal treatment of probabilistic models if coupled with the ability to optimize large scale models helps probabilistic models to move toward an axiomatic and practical approach to large scale machine learning.


\section*{Appendix}
\setcounter{theorem}{0}
\setcounter{proposition}{0}
\setcounter{corollary}{0}
\subsection{Proofs}
\begin{theorem}\label{thrm:Master}
Consider the probability space $(\Omega,\Sigma,\pr)$, the random variable $V$ with finite range $R(V)$ and $P_V(.)$ the probability distribution of $V$. For any event $\sigma \in \Sigma$ with the conditional distribution $P_{V|\sigma}(.)$ the following holds,

\begin{align}
    \pr(\sigma) \leq\underset{v\in R(V)}{\inf} \left \{ \frac{P_V(v)}{P_{V|\sigma}(v) }\right \} \triangleq \mathcal{P}(\sigma \succ V)
\end{align}
$\mathcal{P}(\sigma \succ V)$ is read as maximum probability of $\sigma$ observed by $V$ and $-\mathcal{L}(\sigma \succ V)= -\log\left(\mpr(\sigma \succ V)\right)$ is read as the minimum information in $\sigma$ observed by $V$.
\end{theorem}
\begin{proof}
 $\forall \sigma \in \Sigma$ and $\forall v \in R(V)$ the following is true
\begin{align}
    \pr(v) = \pr(v|\sigma)\pr(\sigma)+\pr(v|\Bar{\sigma})\pr(\Bar{\sigma})
\end{align}
since $\pr(v|\Bar{\sigma})\pr(\Bar{\sigma})\geq 0$ then
\begin{align}
    &\pr(v)-\pr(v|\sigma)\pr(\sigma) \geq 0\label{proof:zerovalue}\\
    &\pr(\sigma)\leq \frac{\pr(v)}{\pr(v|\sigma)}.\label{proof:zerovalue2}
\end{align}
Since (\ref{proof:zerovalue2}) is true for all $v$ such that $\pr(v|\sigma)\neq 0$, then the following holds
\begin{align}
    \pr(\sigma) \leq \underset{v\in R(V)}{\inf} \left \{ \frac{\pr(v)}{\pr(v|\sigma) }\right \}
\end{align}

\end{proof}
\begin{theorem}\label{thrm:ExtendedInfo}
For any random variable $W$ that extends $V$, the following inequality holds
\begin{align}
\mathcal{P}(\sigma\succ W) \leq \mathcal{P}(\sigma \succ V)
\end{align}
\end{theorem}
\begin{proof} 
Since $W$  extends $V$ and $\forall w\in R(W)$ and $\forall v \in R(V)$, $W^{-1}(w)$ is either the subset of $V^{-1}(v)$ or does not have intersection. Similarly $\pr(w,v)$ is either $0$ or $\pr(w,v)=\pr(w)$. Defining the set $R_v(W)\triangleq\lbrace w'|w'\in R(W),W^{-1}(w')\subset V^{-1}(v)\rbrace $, we can write
\begin{align}
    \pr(v)=\summe{w'\in R(W)}{}\pr(w',v)= \summe{w'\in R_v(W)}{}\pr(w',v)\nonumber\\ =\summe{w'\in R_v(W)}{}\pr(w') \label{proof:th4_1}
\end{align}
and similarly
\begin{align}
    \pr(v|\sigma) = \summe{w'\in R(W)}{}\pr(w',v|\sigma) = \summe{w'\in R_v(W)}{}\pr(w'|\sigma).\label{proof:th4_2}
\end{align}
We know that $\pr(w)\geq \pr(w|\sigma)\pr(\sigma)$. Using Theorem \ref{thrm:Master}, $\pr(\sigma)$ can be replaced with $\mathcal{P}(\sigma \succ W)$ and write
\begin{align}
    \pr(w)\geq \pr(w|\sigma) \mathcal{P}(\sigma \succ W).\label{proof:th4_new1}
\end{align}{}
Using (\ref{proof:th4_1},\ref{proof:th4_new1}) we can see that
\begin{align}
    \pr(v) = \summe{w'\in R_v(W)}{}\pr(w')\geq  \summe{w'\in R_v(W)}{}\pr(w'|\sigma)\mathcal{P}(\sigma \succ W)\nonumber\\
    = \mathcal{P}(\sigma \succ W)\summe{w'\in R_v(W)}{}\pr(w'|\sigma).
\end{align}{}
Substituting (\ref{proof:th4_2}) we conclude that
\begin{align}
        &\pr(v) \geq  \pr(v|\sigma)\mathcal{P}(\sigma \succ W),\\
        &\frac{\pr(v)}{\pr(v|\sigma)} \geq  \mathcal{P}(\sigma \succ W).\label{proof:th4_3}
\end{align}
Since (\ref{proof:th4_3}) is valid for all possible outcomes of $V$, the inequality is valid for the minimum of the left-hand side of (\ref{proof:th4_3}). Consequently
\begin{align}
    \underset{v'\in R(V)}{\inf} \left \{ \frac{P_V(v')}{P_{V|\sigma}(v') }\right \} \geq  \mathcal{P}(\sigma \succ W)\\
     \mathcal{P}(\sigma \succ V) \geq  \mathcal{P}(\sigma \succ W)
\end{align}{}
\end{proof}
\begin{corollary}\label{remark:concatinfo}
For any random variable $V$ and $Z$ with concatenation $H=(V,Z)$ the following holds
\begin{align}
\mathcal{P}(\sigma\succ H) \leq \mathcal{P}(\sigma \succ V)
\end{align}
\end{corollary}
\begin{proof}
the proof follows from Theorem \ref{thrm:ExtendedInfo}, if $H$ is extending $V$.
By definition of $(V,Z)$, for all outcomes $h \in R(H)$, $H^{-1}(h) = V^{-1}(v_h)\cap Z^{-1}(z_h)$ for some $z_h\in R(Z),v_h \in R(V)$ such that $h=(v_h,z_h)$.
Therefore every outcome of $H$ is either a subset of some partition induced by  $V$ or does not have intersection with it.
Since $H$ extends $V$, we can directly use Theorem \ref{thrm:ExtendedInfo} to finalize the proof.
\end{proof}
\begin{proposition}\label{def:maxprobdef}
Probability of an outcome $v$ of random variable $V$ in the sense of maximum probability defined as
\begin{align}
    P^*_V(v) \triangleq \sup\left\lbrace\pr(\sigma) | \sigma \in \Sigma , \forall \omega \in \sigma, V(\omega) = v  \right\rbrace,\quad  v \in R(V)
\end{align}
has the following property
\begin{align}
P^*_V(v) = P_V(v)\quad \forall v \in R(V).    
\end{align}
\end{proposition}
\begin{proof}
Consider the set $\Sigma_v= \lbrace \sigma | \sigma \in \Sigma , \forall \omega \in \sigma_V, V(\omega) =V\rbrace$
for all $\sigma_v\in \Sigma_v$ we can say that $\sigma_v\subset V^{-1}(v))$ using the definition of $V^{-1}$.
by the monotonicity of probability we can conclude $\pr(\sigma_v)\leq \pr(V^{-1}(v)$ for all $\sigma_v$. Therefore,
\begin{align}
 \pr(V^{-1}(v)) =\sup\left\lbrace\pr(\sigma) | \sigma \in \Sigma , \forall \omega \in \sigma, V(\omega) = v  \right\rbrace
\end{align}
for all $v\in R(V)$.
\end{proof}
\begin{corollary}\label{corollary:mprob_eqv}
Considering Definition \ref{def:classicvar} and Theorem \ref{thrm:Master}, given some set $\sigma_v\in \Sigma$ where $P_{V|\sigma_v}(v)=1$,
\begin{align}
P_V(v) = \mathcal{P}(\sigma_v \succ V)    
\end{align}
\end{corollary}
\begin{proof}
we can write
\begin{align}
   \mathcal{P}(\sigma_v \succ V)&=  \underset{v'\in R(V)}{\inf} \left \{ \frac{P_V(v')}{P_{V|\sigma_v}(v') }\right \}
\end{align}
Note that outcomes with zero probability in $P_{V|\sigma_v}$ do not pose constraints on probability of $\sigma_v$ (refer to (\ref{proof:zerovalue})).
Since $P_V$ is zero for every $v'\neq v$ we get
\begin{align}
    \mathcal{P}(\sigma_v \succ V)&= \underset{}{\inf} \left \{ \frac{P_V(v)}{1}\right \} = P_V(v).  
\end{align}

\end{proof}
\begin{lemma}\label{lemma:prop6}
The following inequality holds
\begin{align}
&\underset{i}{\sup}\left\lbrace a_i\right\rbrace \leq \frac{1}{\alpha}\log \left(\summe{i}{}e^{\alpha a_i} \right)\\
&\underset{i}{\inf}\left\lbrace a_i\right\rbrace \geq -\frac{1}{\alpha}\log \left(\summe{i}{}e^{-\alpha a_i} \right)
, \quad a_i\in \mathbb{R}, \alpha >0
\end{align}
where the equality holds as $\alpha \to \infty$.
\end{lemma}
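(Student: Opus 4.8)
The plan is to recognize both inequalities as instances of the standard log-sum-exp bound and to derive the second from the first by a sign flip. Since the index set is finite in every application (the sums range over the finite set $R(V)$), the supremum is attained, so I would write $M \triangleq \sup_i\{a_i\} = \max_i\{a_i\}$.

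First I would establish the upper bound. Because $e^{\alpha M}$ occurs as one of the summands and every summand is nonnegative, I have $\sum_i e^{\alpha a_i} \geq e^{\alpha M}$. Taking logarithms (monotone) and dividing by $\alpha > 0$ (which preserves the direction of the inequality) yields $\frac{1}{\alpha}\log\left(\sum_i e^{\alpha a_i}\right) \geq M = \sup_i\{a_i\}$, which is the first claim. Next I would obtain the second inequality essentially for free by applying the first to the sequence $(-a_i)$: using $\sup_i\{-a_i\} = -\inf_i\{a_i\}$, the first inequality gives $-\inf_i\{a_i\} \leq \frac{1}{\alpha}\log\left(\sum_i e^{-\alpha a_i}\right)$, and rearranging produces $\inf_i\{a_i\} \geq -\frac{1}{\alpha}\log\left(\sum_i e^{-\alpha a_i}\right)$.

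For the equality in the limit, I would sandwich the sum. Letting $N$ denote the finite number of terms, every summand is at most $e^{\alpha M}$, so $e^{\alpha M} \leq \sum_i e^{\alpha a_i} \leq N e^{\alpha M}$. Taking $\frac{1}{\alpha}\log(\cdot)$ throughout gives $M \leq \frac{1}{\alpha}\log\left(\sum_i e^{\alpha a_i}\right) \leq M + \frac{\log N}{\alpha}$, and letting $\alpha \to \infty$ squeezes the middle expression to $M$; the analogous sandwich, applied to $(-a_i)$, handles the infimum case.

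The argument is elementary, so there is no substantive obstacle; the only point requiring a little care — the "hard part," such as it is — is justifying that the supremum is genuinely attained and that the counting bound $\sum_i e^{\alpha a_i} \leq N e^{\alpha M}$ is valid, both of which rely on the index set being finite. This is guaranteed in all uses here because the sums run over $R(V)$, which is finite by the standing assumption on the observable random variable.
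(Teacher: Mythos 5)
Your proof is correct and follows essentially the same route as the paper's: both arguments hinge on the single dominant term $e^{\alpha a^*}$ (the paper factors out $a^*$ and observes the remaining log is nonnegative, you compare the sum directly to that summand) and both reduce the infimum case to the supremum case via $\inf_i\{a_i\} = -\sup_i\{-a_i\}$. Your treatment of the limit is in fact slightly more careful than the paper's, since the explicit sandwich $e^{\alpha M} \leq \sum_i e^{\alpha a_i} \leq N e^{\alpha M}$ makes precise the step the paper passes over when it asserts the correction term vanishes.
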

\begin{proof}
\textbf{(Lemma)} Let us denote $\underset{i}{\sup}\left\lbrace  a_i\right\rbrace = a^*$
\begin{align}
    \frac{1}{\alpha}\log \left(\summe{i}{}e^{\alpha a_i} \right)= \frac{1}{\alpha}\log \left(\summe{i}{}e^{\alpha (a_i-a^*+a^*)} \right)=\nonumber\\
    a^* +\frac{1}{\alpha}\log \left(\summe{i}{}e^{\alpha (a_i-a^*)} \right).\label{lemma:prop6_1}
\end{align}
since there exist at least an element in the set having the same value as supremum of the set, $\log \left(\summe{i}{}e^{\alpha (a_i-a^*)} \right) = \log(1+ C)\geq 0$ for some positive constant $C$ and therefore positive. Since $\alpha >0$, (\ref{lemma:prop6_1}) is greater than the supremum
\begin{align}
    &\frac{1}{\alpha}\log \left(\summe{i}{}e^{\alpha a_i} \right) =\nonumber\\
    &\underset{i}{\sup}\left\lbrace a_i\right\rbrace +\frac{1}{\alpha}\log \left(\summe{i}{}e^{\alpha (a_i-a^*)} \right) \geq \underset{i}{\sup}\left\lbrace a_i\right\rbrace
\end{align}
In the limit case since 
\begin{align}
    \underset{\alpha \to \infty}{\lim }a^* +\frac{1}{\alpha}\log &\left(\summe{i}{}e^{\alpha (a_i-a^*)} \right) = \nonumber\\
    &a^* + \underset{\alpha \to \infty}{\lim} \frac{1}{\alpha}\log \left(\summe{i}{}e^{\alpha (a_i-a^*)} \right)
\end{align}
since $(a_i-a^*)\leq 0$, then 
\begin{align}
    a^*+\underset{\alpha \to \infty}{\lim } \frac{1}{\alpha}\log \left(\summe{i}{}e^{\alpha (a_i-a^*)} \right) = a^*
\end{align}
We can use the fact that $-\underset{i}{\sup}\left\lbrace -a_i\right\rbrace = \underset{i}{\inf}\left\lbrace a_i\right\rbrace$ to prove the lemma for the infimum case.
\end{proof}
\begin{proposition}\label{prop:alphafamily}
For the softmax probability family of functions $\mathcal{P}_\alpha$ defined as
\begin{align}
    &\mathcal{P}_\alpha(\sigma\succ V) \triangleq \left(\summe{v\in R(V)}{\space}\left(\frac{P_V(v)}{P_{V|\sigma}(v)}\right)^{-\alpha}\right)^{-\frac{1}{\alpha}},\quad \alpha >0,
    \end{align}
    the following is true,
    \begin{align}
    &\mathcal{P}_\alpha(\sigma\succ V)\leq \mathcal{P}(\sigma \succ V) ,\quad \forall \alpha> 0,\forall \sigma \in \Sigma,
\end{align}
and the equality holds as $\alpha \to +\infty$. The softmin information family is defined as
\begin{align}
-\mathcal{L}_{\alpha}(\sigma\succ V) \triangleq -\log\left(\mathcal{P}_\alpha(\sigma\succ V) \right).    
\end{align}
\end{proposition}
\begin{proof}

By defintion of $\mlogit$ we have
\begin{align}
    \mlogit(\sigma\prec V) = \underset{v\in R(V)}{\inf} \left \{ {\logit(v) -\logit(v|\sigma)}\right \}.
\end{align}
By directly using the inequality proved in Lemma \ref{lemma:prop6} we can write
\begin{align}
    \underset{v\in R(V)}{\inf} \left \{ {\logit(v) -\logit(v|\sigma)}\right \} &\geq -\frac{1}{\alpha}\log\left( e^{-\alpha(\logit(v) -\logit(v|\sigma))} \right)\\
    \mathcal{L}(\sigma \succ V) &\geq \mathcal{L}_\alpha(\sigma \succ V)
\end{align}
and exponentiating the above inequality we get
\begin{align}
    \mathcal{P}(\sigma \succ V) \geq \mathcal{P}_\alpha(\sigma \succ V)
\end{align}
\end{proof}
\subsection{Log Likelihood Derivation}
The sufficient condition for the global maximizer of $\logit(M^*|M)$ is brought in the following proposition.
\begin{proposition}
Consider the function $\logit(M^*|M_\theta)$ and the following maximization problem
\begin{align}
    \theta^* = \underset{\theta \in \mathbb{R}^d}{\arg \max }\{ \logit(M^*|M_\theta) \}.\label{appeq:ll2}
\end{align}
If $M$ and $M^*$ are independent conditioned on $V$, The sufficient condition for $\theta$ to be the global maximizer of (\ref{appeq:ll2}) is
\begin{align}
    &\pr(v'|M_\theta^*) =0 , \nonumber\\
    &\forall v',\pr(v'|M^*)\neq \underset{v'' \in R(V)}{\max} \pr(v''|M^*)
\end{align}
\end{proposition}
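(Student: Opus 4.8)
The plan is to reduce the maximization to an elementary statement about weighted averages. First I would substitute the conditional-independence assumption $\pr(M^*|v,M_\theta)=\pr(M^*|v)$ into the marginal form (\ref{eq:loglikelihoodmarginal}), obtaining
\begin{align}
\logit(M^*|M_\theta) = \log\left(\summe{v\in R(V)}{}\pr(M^*|v)\,\pr(v|M_\theta)\right) - \log\left(\summe{v'\in R(V)}{}\pr(v'|M_\theta)\right).
\end{align}
The key observation is that the coefficients $\pr(M^*|v)$ no longer depend on $\theta$, so the objective depends on the parameters only through the nonnegative weights $\pr(v|M_\theta)$.

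Next I would exponentiate and read the result as a weighted average,
\begin{align}
\exp\!\big(\logit(M^*|M_\theta)\big) = \frac{\summe{v\in R(V)}{}\pr(M^*|v)\,\pr(v|M_\theta)}{\summe{v'\in R(V)}{}\pr(v'|M_\theta)},
\end{align}
which is a convex combination of the fixed values $\{\pr(M^*|v)\}_{v\in R(V)}$. Any such weighted average is bounded above by $\max_{v}\pr(M^*|v)$, and since $\log$ is increasing this yields the uniform bound $\logit(M^*|M_\theta)\le \log\max_{v}\pr(M^*|v)$. The elementary equality analysis for a weighted average then shows the bound is attained exactly when all the mass sits on the maximizing outcomes, i.e. $\pr(v|M_\theta)=0$ for every $v$ with $\pr(M^*|v)<\max_{v''}\pr(M^*|v'')$. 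This identifies both the global value and the attaining distributions, so any $\theta$ meeting this support condition is a global maximizer.

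The remaining step is to translate the condition from the posterior $\pr(M^*|v)$ into the oracle likelihood $\pr(v|M^*)$ appearing in the statement. By Bayes' rule $\pr(M^*|v)=\pr(v|M^*)\pr(M^*)/\pr(v)$, so under the standing uniform prior on $V$ the denominator $\pr(v)$ is constant in $v$ and the two quantities are proportional; hence $\arg\max_{v}\pr(M^*|v)=\arg\max_{v}\pr(v|M^*)$, and the support condition becomes exactly $\pr(v'|M_\theta)=0$ for all $v'$ with $\pr(v'|M^*)\neq\max_{v''}\pr(v''|M^*)$, which is the stated condition.

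I expect the main obstacle to be the bookkeeping in this last translation: the clean statement in terms of $\pr(v|M^*)$ relies on the prior over $V$ being uniform, since otherwise the maximizing set is that of $\pr(v|M^*)/\pr(v)$ rather than $\pr(v|M^*)$; I would therefore make that assumption explicit. A secondary point worth one sentence is achievability — because the proposition claims only sufficiency, it suffices to verify that any $\theta$ satisfying the support condition attains the upper bound $\log\max_{v}\pr(M^*|v)$, and the necessity direction, which follows from the same equality analysis, is not needed.
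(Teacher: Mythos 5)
Your proof is correct and follows essentially the same route as the paper's: both rewrite $\logit(M^*|M_\theta)$ under conditional independence as the logarithm of a convex combination of fixed quantities with weights $\pr(v|M_\theta)$, bound it by the maximum of those quantities, and characterize equality as the model placing no mass outside the maximizing outcomes. Your explicit caveat that stating the condition in terms of $\pr(v'|M^*)$ rather than $\pr(v'|M^*)/\pr(v')$ requires the uniform prior on $V$ is worth keeping, since the paper's own proof glosses over exactly this point in its definition of $R_{\textrm{max}}(V)$.
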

\begin{proof}
We write the log-likelihood as
\begin{align}
\logit(M^*|M) = \log \Bigg(\summe{v\in R(V)}{}\pr(M^*|v,M)\pr(v|M)\Bigg)\nonumber\\
-\log\Bigg(\summe{v'\in R(V)}{}\pr(v'|M)\Bigg)
\end{align}
\begin{align}
=\log \Bigg(&\summe{v\in R(V)}{}\frac{\pr(v|M^*)\pr(v|M)}{\pr(v)}\Bigg)\nonumber\\
&-\log\Bigg(\summe{v'\in R(V)}{}\pr(v'|M)\Bigg)\label{appeq:ll1} + \log(\pr(M^*)).
\end{align}
The second and third term in the right hand side of (\ref{appeq:ll1}) are constants. The first term is logarithm of a convex combination of $\pr(v|M^*)/\pr(v)$. We can bound the first term
\begin{align}
     \log \Bigg(\summe{v\in R(V)}{}&\frac{\pr(v|M^*)}{\pr(v)}\pr(v|M)\Bigg) \leq \log\left(\underset{v\in R(V)}{\sup}\left\lbrace \frac{\pr(v|M^*)}{\pr(v)}  \right\rbrace \right).\label{appeq:ll2}
\end{align}
We define $R_{\textrm{max}}(V)\subset R(V)$ as $R_{\textrm{max}}= \left\lbrace v|v\in R(V), \pr(v|M^*)\neq \underset{v\in R(V)}{\sup}\left\lbrace \frac{\pr(v|M^*)}{\pr(v)}  \right\rbrace\right\rbrace$.
We can check that the equality in (\ref{appeq:ll2}) holds for some $\theta'$, if the condition
\begin{align}
    \pr(v |M_{\theta'}) =0 \iff \forall v\in R(V) , v\notin R_{\textrm{max}}(V)\label{appeq:llsufficient}
\end{align}
is true.

For $\theta'$, (\ref{appeq:ll2}) reduces to 
\begin{align}
    \log\Bigg(\summe{v\in R_{\textrm{max}}(V)}{}\frac{\pr(v|M^*)}{\pr(v)}\pr(v|M_{\theta'})\Bigg) =\nonumber\\
    \log\Bigg(\summe{v\in R_{\textrm{max}}(V)}{}\underset{v\in R(V)}{\sup}\left\lbrace \frac{\pr(v|M^*)}{\pr(v)}  \right\rbrace\pr(v|M_{\theta'})\Bigg)\nonumber\\
    = \log\left(\underset{v\in R(V)}{\sup}\left\lbrace \frac{\pr(v|M^*)}{\pr(v)}  \right\rbrace \right).
\end{align}

\end{proof}
$\theta'$ achieves the upper bound and therefore the condition in (\ref{appeq:llsufficient}) is a sufficient condition for the solution of (\ref{appeq:ll2}).
\subsection{Intersection Derivation}
We start by rewriting the conditional independence assumption and the uniform prior 
\begin{align}
    &\pr(M,M^*|v) = \pr(M|v)\pr(M^*|v),\label{appeq:intersect:assume1}\\
    &\pr(v) = \frac{1}{|R(V)|}.\label{appeq:intersect:assume2}
\end{align}
Remember that the sufficient condition for the solution of intersection objective function was
\begin{align}
     \pr(v|M_\theta,M^*) =  \pr_\alpha(v|M_\theta)\label{appeq:intersect:condition}.
 \end{align}
Using the assumptions in (\ref{appeq:intersect:assume1},\ref{appeq:intersect:assume2}), we can represent the left hand side of (\ref{appeq:intersect:condition}), i.e. the posterior, as
 \begin{align}
     \frac{\pr(M_\theta,M^*|v)\pr(v)}{\pr(M_\theta,M^*)}=\frac{\pr(M_\theta|v)\pr(M^*|v)\pr(v)}{\pr(M_\theta,M^*)}=\nonumber\\
     \frac{\pr(v|M_\theta)\pr(v|M^*)\pr(M_\theta)\pr(M^*)}{\pr(M_\theta,M^*)\pr(v)}.\label{appeq:intersect:finalform} 
\end{align}
Having (\ref{appeq:intersect:assume2}), we can see that (\ref{appeq:intersect:finalform}) is element-wise multiplication of two probability vectors followed by normalization, therefore the condition in (\ref{appeq:intersect:condition}) reduces to
\begin{align}
    \frac{\pr(v|M_\theta)\pr(v|M^*)}{C} = \pr_\alpha(v|M),\label{appeq:intersection:indpcondition}\\
    C = \summe{v' \in R(V)}{}\pr(v|M_\theta)\pr(v|M^*).
\end{align}
\subsubsection{Setting $\alpha=2$}
In the case of $\alpha =2 $, (\ref{appeq:intersection:indpcondition}) is 
\begin{align}
     \frac{\pr(v|M_\theta)\pr(v|M^*)}{\summe{v' \in R(V)}{}\pr(v|M_\theta)\pr(v|M^*)} = \frac{\pr(v|M_\theta)\pr(v|M_\theta)}{\summe{v'\in R(V)}{}\pr(v|M_\theta)\pr(v|M_\theta)}.
\end{align}
We can show that for all $\theta^*\in \mathbb{R}^d$ such that
\begin{equation}
    \pr(v|M_{\theta^*})= \pr(v|M^*),\label{appeq:intersection:a2assum}
\end{equation} the condition in (\ref{appeq:intersection:indpcondition}) is satisfied. By plugging in $\theta^*$ in (\ref{appeq:intersection:indpcondition}) we get
\begin{align}
     \frac{\pr(v|M_{\theta^*})\pr(v|M^*)}{\summe{v' \in R(V)}{}\pr(v|M_{\theta^*})\pr(v|M^*)} &= \frac{\pr(v|M_{\theta^*})\pr(v|M_{\theta^*})}{\summe{v'\in R(V)}{}\pr(v|M_{\theta^*})\pr(v|M_{\theta^*})}.
\end{align}
If we use (\ref{appeq:intersection:a2assum}), we can see that the condition is satisfied. Therefore in the case of $\alpha=2$, the distribution of the oracle is a solution.

\section*{Acknowledgment}

\bibliographystyle{abbrvnat}
\bibliography{maxprob.bib}

\begin{thebibliography}{24}
\providecommand{\natexlab}[1]{#1}
\providecommand{\url}[1]{\texttt{#1}}
\expandafter\ifx\csname urlstyle\endcsname\relax
  \providecommand{\doi}[1]{doi: #1}\else
  \providecommand{\doi}{doi: \begingroup \urlstyle{rm}\Url}\fi

\bibitem[Berger and Bernardo(1992{\natexlab{a}})]{berger1992development}
J.~O. Berger and J.~M. Bernardo.
\newblock On the development of the reference prior method.
\newblock \emph{Bayesian statistics}, 4\penalty0 (4):\penalty0 35--60,
  1992{\natexlab{a}}.

\bibitem[Berger and Bernardo(1992{\natexlab{b}})]{berger1992ordered}
J.~O. Berger and J.~M. Bernardo.
\newblock Ordered group reference priors with application to the multinomial
  problem.
\newblock \emph{Biometrika}, 79\penalty0 (1):\penalty0 25--37,
  1992{\natexlab{b}}.

\bibitem[Berger et~al.(2009)Berger, Bernardo, Sun, et~al.]{berger2009formal}
J.~O. Berger, J.~M. Bernardo, D.~Sun, et~al.
\newblock The formal definition of reference priors.
\newblock \emph{The Annals of Statistics}, 37\penalty0 (2):\penalty0 905--938,
  2009.

\bibitem[Berger et~al.(2015)Berger, Bernardo, Sun, et~al.]{berger2015overall}
J.~O. Berger, J.~M. Bernardo, D.~Sun, et~al.
\newblock Overall objective priors.
\newblock \emph{Bayesian Analysis}, 10\penalty0 (1):\penalty0 189--221, 2015.

\bibitem[Bernardo(1979)]{bernardo1979reference}
J.~M. Bernardo.
\newblock Reference posterior distributions for bayesian inference.
\newblock \emph{Journal of the Royal Statistical Society: Series B
  (Methodological)}, 41\penalty0 (2):\penalty0 113--128, 1979.

\bibitem[Blei et~al.(2017)Blei, Kucukelbir, and McAuliffe]{blei2017variational}
D.~M. Blei, A.~Kucukelbir, and J.~D. McAuliffe.
\newblock Variational inference: A review for statisticians.
\newblock \emph{Journal of the American statistical Association}, 112\penalty0
  (518):\penalty0 859--877, 2017.

\bibitem[Consonni et~al.(2018)Consonni, Fouskakis, Liseo, Ntzoufras,
  et~al.]{consonni2018prior}
G.~Consonni, D.~Fouskakis, B.~Liseo, I.~Ntzoufras, et~al.
\newblock Prior distributions for objective bayesian analysis.
\newblock \emph{Bayesian Analysis}, 13\penalty0 (2):\penalty0 627--679, 2018.

\bibitem[Cover and Thomas(2012)]{cover2012elements}
T.~M. Cover and J.~A. Thomas.
\newblock \emph{Elements of information theory}.
\newblock John Wiley \& Sons, 2012.

\bibitem[Diaconis et~al.(1979)Diaconis, Ylvisaker,
  et~al.]{diaconis1979conjugate}
P.~Diaconis, D.~Ylvisaker, et~al.
\newblock Conjugate priors for exponential families.
\newblock \emph{The Annals of statistics}, 7\penalty0 (2):\penalty0 269--281,
  1979.

\bibitem[Gelman et~al.(2006)]{gelman2006prior}
A.~Gelman et~al.
\newblock Prior distributions for variance parameters in hierarchical models
  (comment on article by browne and draper).
\newblock \emph{Bayesian analysis}, 1\penalty0 (3):\penalty0 515--534, 2006.

\bibitem[Hansen and Yu(2001)]{hansen2001model}
M.~H. Hansen and B.~Yu.
\newblock Model selection and the principle of minimum description length.
\newblock \emph{Journal of the American Statistical Association}, 96\penalty0
  (454):\penalty0 746--774, 2001.

\bibitem[Jaynes(1957{\natexlab{a}})]{jaynes1957information}
E.~T. Jaynes.
\newblock Information theory and statistical mechanics.
\newblock \emph{Physical review}, 106\penalty0 (4):\penalty0 620,
  1957{\natexlab{a}}.

\bibitem[Jaynes(1957{\natexlab{b}})]{jaynes1957information2}
E.~T. Jaynes.
\newblock Information theory and statistical mechanics. ii.
\newblock \emph{Physical review}, 108\penalty0 (2):\penalty0 171,
  1957{\natexlab{b}}.

\bibitem[Jaynes(1968)]{jaynes1968prior}
E.~T. Jaynes.
\newblock Prior probabilities.
\newblock \emph{IEEE Trans. Systems Science and Cybernetics}, 4\penalty0
  (3):\penalty0 227--241, 1968.

\bibitem[Jaynes(2003)]{jaynes2003probability}
E.~T. Jaynes.
\newblock \emph{Probability theory: The logic of science}.
\newblock Cambridge university press, 2003.

\bibitem[Jeffreys(1946)]{jeffreys1946invariant}
H.~Jeffreys.
\newblock An invariant form for the prior probability in estimation problems.
\newblock \emph{Proceedings of the Royal Society of London. Series A.
  Mathematical and Physical Sciences}, 186\penalty0 (1007):\penalty0 453--461,
  1946.

\bibitem[Jordan et~al.(1999)Jordan, Ghahramani, Jaakkola, and
  Saul]{jordan1999introduction}
M.~I. Jordan, Z.~Ghahramani, T.~S. Jaakkola, and L.~K. Saul.
\newblock An introduction to variational methods for graphical models.
\newblock \emph{Machine learning}, 37\penalty0 (2):\penalty0 183--233, 1999.

\bibitem[Kass and Wasserman(1996)]{kass1996selection}
R.~E. Kass and L.~Wasserman.
\newblock The selection of prior distributions by formal rules.
\newblock \emph{Journal of the American Statistical Association}, 91\penalty0
  (435):\penalty0 1343--1370, 1996.

\bibitem[Krizhevsky et~al.(2009)Krizhevsky, Hinton,
  et~al.]{krizhevsky2009learning}
A.~Krizhevsky, G.~Hinton, et~al.
\newblock Learning multiple layers of features from tiny images.
\newblock 2009.

\bibitem[Krizhevsky et~al.(2012)Krizhevsky, Sutskever, and
  Hinton]{krizhevsky2012imagenet}
A.~Krizhevsky, I.~Sutskever, and G.~E. Hinton.
\newblock Imagenet classification with deep convolutional neural networks.
\newblock In \emph{Advances in neural information processing systems}, pages
  1097--1105, 2012.

\bibitem[MacKay and Mac~Kay(2003)]{mackay2003information}
D.~J. MacKay and D.~J. Mac~Kay.
\newblock \emph{Information theory, inference and learning algorithms}.
\newblock Cambridge university press, 2003.

\bibitem[Marsh(2013)]{marsh2013introduction}
C.~Marsh.
\newblock Introduction to continuous entropy.
\newblock 2013.

\bibitem[Ranganath et~al.(2014)Ranganath, Gerrish, and
  Blei]{ranganath2014black}
R.~Ranganath, S.~Gerrish, and D.~M. Blei.
\newblock Black box variational inference.
\newblock In \emph{Proceedings of the Seventeenth International Conference on
  Artificial Intelligence and Statistics}, 2014.

\bibitem[Seidenfeld(1979)]{seidenfeld1979not}
T.~Seidenfeld.
\newblock Why i am not an objective bayesian; some reflections prompted by
  rosenkrantz.
\newblock \emph{Theory and Decision}, 11\penalty0 (4):\penalty0 413--440, 1979.

\end{thebibliography}

\begin{IEEEbiography}[{\includegraphics[width=1in,height=1.25in,clip,keepaspectratio]{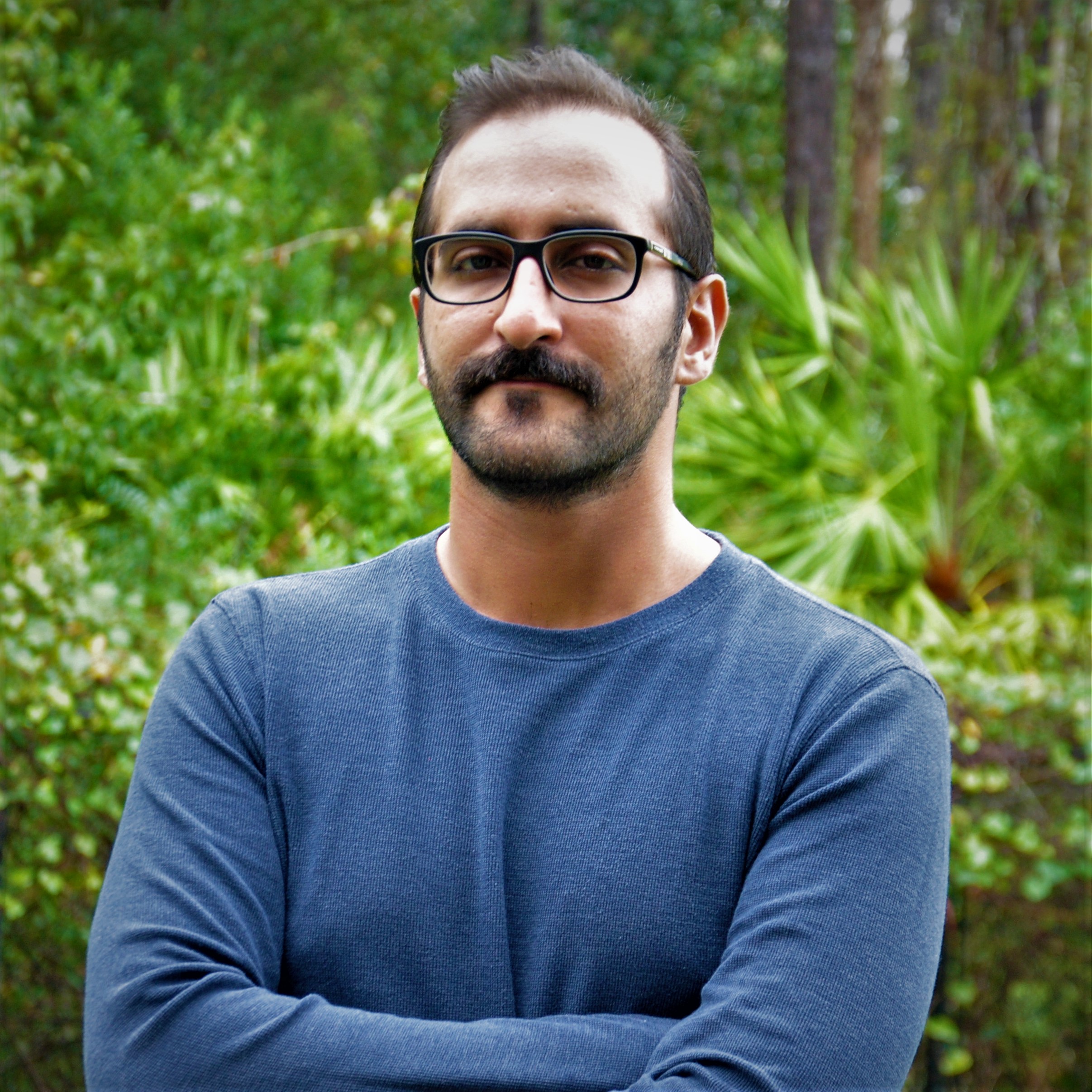}}]{Amir Emad Marvasti } joined the computer science department as a PhD student in 2014. Currently, he works at Computational Imaging Lab(CIL) under the supervision of Professor Hassan Foroosh. He received his B.S. in Computer Engineer from Sharif University of Technology, Tehran, Iran. 
His research interests are Probabilistic Machine Learning and Probability Theory. At CIL, he is focusing on the formalization of prior knowledge, regularization of probabilistic models, and development of finite-state probabilistic models.
\end{IEEEbiography}
\begin{IEEEbiography}[{\includegraphics[width=1in,height=1.25in,clip,keepaspectratio]{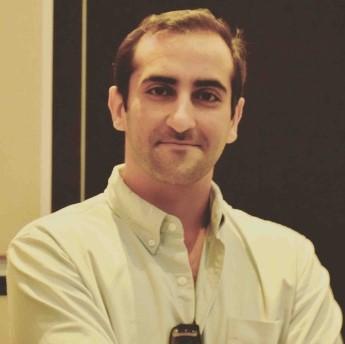}}]{Ehsan Emad Marvasti } received the B.S. degree in computer engineering from the Sharif University of Technology, Tehran, Iran, 2014. He is currently working toward the Ph.D. degree in computer science at the Department of Computer Science, University of Central Florida, Orlando, FL, USA. He has been a member of Image Processing Lab at Sharif University of Technology from 2012-2014 and Computational Imaging Lab (CIL) at University of Central Florida from 2014-2017. Currently he is a member of Connected and Autonomous Vehicle Research Lab (CAVREL). His research interests include vehicular cooperative perception and cognition, cooperative sensor fusion and machine learning.
\end{IEEEbiography}

\begin{IEEEbiography}[{\includegraphics[width=1in,height=1.25in,clip,keepaspectratio]{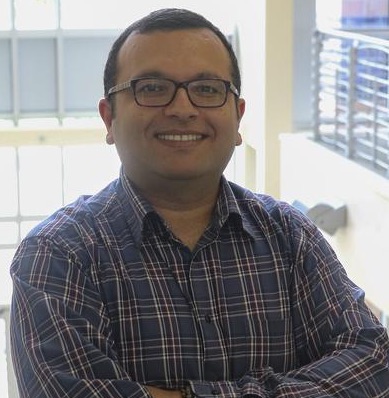}}]{Dr.Ulas Bagci} is a faculty member at the Center for Research in Computer Vision (CRCV), and the SAIC Chair Professor in University of Central Florida (UCF). His research interests are artificial intelligence, machine learning and their applications in biomedical and clinical imaging. Prof. Bagci has more than 200 peer-reviewed articles in related topics. Previously, he was a staff scientist and lab co-manager at the National Institutes of Health’s radiology and imaging sciences department, center for infectious disease imaging. Dr. Bagci holds two NIH R01 grants (Principal Investigator) and serve as an external committee member of AIR (artificial intelligence resource) at the NIH. Dr. Bagci serves as an area chair for MICCAI for several years and he is an associate editor of top-tier journals in his fields such as IEEE Trans. on Medical Imaging and Medical Physics, and editorial board member of Medical Image Analysis.
\end{IEEEbiography}
\begin{IEEEbiography}[{\includegraphics[width=1in,height=1.25in,clip,keepaspectratio]{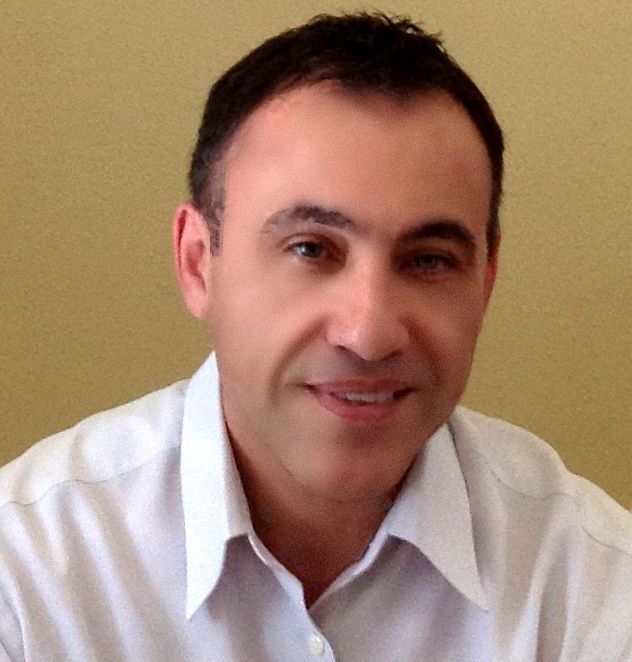}}]{Hassan Foroosh}(Senior Member, IEEE) is currently a CAE Link Professor of computer science with the University of Central Florida (UCF), Orlando, FL, USA. He has authored or coauthored over 160 peer-reviewed scientific articles in the areas of computer vision, image processing, and machine learning. He received the Piero Zamperoni Award from the International Association of Pattern Recognition (IAPR), in 2004, the Best Scientific Paper Award from IAPR-International Conference on Pattern Recognition (IAPR-ICPR), in 2008, and the Best Paper Award from the IEEE International Conference on Image Processing (ICIP), in 2018. He is also the Principal Investigator and the Lead of the Science Data Center of the NASA GOLD Mission that launched a satellite into Earth’s geo-stationary orbit to study the space weather using ultraviolet imaging, in 2018. He has been serving on the Editorial Boards and Organizing Committees of various IEEE Transactions, conferences, and working groups.
\end{IEEEbiography}

\end{document}